\newenvironment{proof}{\noindent{\sc Proof.}}{\qed}
\newtheorem{theorem}{Theorem}[section]
\newtheorem{lemma}{Lemma}[section]
\newtheorem{rem}{Remark}[section]
\newtheorem{definition}{Definition}[section]
\newtheorem{prop}{Proposition}[section]
\newcommand{\qed}{$\blacksquare$}
\def\bhag#1{\noindent
\setcounter{equation}{0}
\section{#1}
}
\def\argmin{\mathop{\hbox{{\rm arg min}}}}
\def\RR{{\mathbb R}}
\def\ZZ{{\mathbb Z}}
\def\bs#1{{\boldsymbol{#1}}}
\def\x{\mathbf{x}}
\def\y{\mathbf{y}}
\def\w{\mathbf{w}}
\def\v{\mathbf{v}}
\def\C{{\mathcal C}}
\def\V{{\cal V}}
\def\W{\mathbf{W}}
\def\argmin{\mathop{\hbox{\textrm{arg min}}}}
\def\be{\begin{equation}}
\def\ee{\end{equation}}
\def\bea{\begin{eqnarray}}
\def\eea{\end{eqnarray}}
\def\eref#1{(\ref{#1})}
\def\disp{\displaystyle}
\def\cfn#1{\chi_{{}_{ #1}}}
\def\donchitre#1#2{\vskip 6.5cm\noindent
\parbox[t]{1in}{\special{eps:#1.eps x=6.5cm y=5.5cm}}
\hbox to 7cm{}\parbox[t]{0.0cm}{\special{eps:#2.eps x=6.5cm y=5.5cm}}}
\def\tn{|\!|\!|}
\def\XX{{\mathbb X}}
\def\BB{{\mathbb B}}
\def\bs#1{{\boldsymbol{#1}}}
\def\ls{\lesssim}
\def\gs{\gtrsim}
\def \x{\mathbf{x}}
\def \w{\mathbf{w}}
\def \v{\mathbf{v}}
\def \y{\mathbf{y}}
\def \U{\mathbf{U}}
\def \V{\mathbf{V}}
\def \W{\mathbf{W}}
\newcommand{\Gc}{\mathcal{G}}
\newcommand{\Tc}{\mathcal{T}}
\title{A Manifold Learning Approach for Gesture Recognition from Micro-Doppler Radar Measurements}
\author{
 E. S. Mason\thanks{
 Hawkeye 360, 196 Van Buren St \#450, Herndon, VA 20170. \textsf{email: eric.mason@he360.com}}\ \  
 H.~N.~Mhaskar
 \thanks{
Institute of Mathematical Sciences, Claremont Graduate University, Claremont, CA 91711, U.S.A.. 
The research of HNM was supported in part by ARO grant W911NF2110218 and NSF DMS grant 2012355.
\textsf{email:} hrushikesh.mhaskar@cgu.edu} 
 \ \
  A. Guo \thanks{
    Mathematics/Computer Science Department, Pomona College, Claremont, CA 91711.
    \textsf{email:} simingadam.guo@pomona.edu, agsmguo@gmail.com} 
  }
\date{}
\begin{document}
\maketitle

\begin{abstract}
A recent paper (Neural Networks, {\bf 132} (2020), 253-268) introduces a straightforward and simple kernel based approximation for manifold learning that does not require the knowledge of anything about the manifold, except for its dimension.
In this paper, we examine how the pointwise error in approximation using least squares optimization based on similarly localized kernels depends upon the data characteristics and deteriorates as one goes away from the training data.
The theory is presented with an abstract localized kernel, which can utilize any prior knowledge about the data being located on an unknown sub-manifold of a known manifold.

We demonstrate the performance of our approach using a publicly available micro-Doppler data set, and investigate the use of different preprocessing measures, kernels, and manifold dimensions.
Specifically, it is shown that the localized kernel introduced in the above mentioned paper when used with PCA components leads to a near-competitive performance to deep neural networks, and offers significant improvements in training speed and memory requirements.  
To demonstrate the fact that our methods are agnostic to the domain knowledge, we examine the classification problem in a simple  video data set. 
\end{abstract}

\noindent
\textbf{Keywords:} Machine Learning,
	Kernel Methods, 
	Micro-Doppler Radar
	Gesture Recognition

\bhag{Introduction}\label{bhag:introduction}

Identification of hand gestures is a subject of increasing research attention in the area of human-computer interaction, see for example, \cite{ahmed2021hand, skaria2020deep} for  recent reviews. 
Applications include convenient device control, infection prevention in clinical settings, safer and quicker accessibility of features in automotive, and, in a more general sense, detection of unmanned aerial vehicles. 
In contrast to the commonly used hand gesture signal acquisition approaches  such as camera, infra-red sensors, and ultrasonic sensors,  radar sensors are observed to yield a higher performance in adverse lighting conditions and complex background. 
Among the common types of waveforms used by inexpensive K-band radar sensors - pulse, continuous waveform (CW),  and frequency modulated continuous waveform (FMCW)- the FMCW radars provides an improved simultaneous estimation of the range and the Doppler signatures.
Many machine learning strategies are used for the classification of hand gestures, ranging from nearest neighbor and support vector machines (SVMs) to deep convolutional neural networks (CNNs) obtaining an accuracy as high as 95\%. 
Except for deep networks, common machine learning approaches typically requires the extraction of hand crafted features based on domain knowledge.
On the other hand, training a CNN is typically highly time consuming, and the mathematical theory of how, when, and why CNNs will produce the right output is still under development.
In this paper, we demonstrate the use of manifold learning to classify hand gesture signals from FMCW radar data, agnostic of the fact that we are dealing with radar signals (i.e., independently of domain knowledge).
Our general purpose method yields results that compare favorably with other methods in the literature that have been used for micro-Doppler gesture recognition, including CNNs.

The fundamental problem of machine learning is the following. 
We have (training) data  of the form $\{(\x_j,f(\x_j)+\epsilon_j)\}$, where, for some integer $d\ge 1$, $\x_j\in\RR^d$ are realizations of a random variable,  $f$ is an unknown real valued function, and the $\epsilon_j$'s are realizations of a mean zero random variable. 
The distributions of both the random variables are not known.
The problem is to approximate $f$ with a suitable model, especially for the points which are not in the training data.
In the context of classification problems, $f$ is the labeling function defined as follows. If there are $K$ classes, we define
$$
\cfn{i}(\x)=\begin{cases}
	1, &\mbox{if $\x$ belongs to class $i$,}\\
	0, &\mbox{otherwise,}
\end{cases}
$$
and
$$
f(\x)=\argmin_{1\le i\le K}\cfn{i}(\x).
$$
A major theoretical problem in machine learning is the so called curse of dimensionality - the complexity of the approximating model increases exponentially with the dimension of the input. 
Another theoretical problem is that if the class boundaries are not smooth or not well separated then the labeling function is not smooth, resulting again in an increased model complexity.

Manifold learning tries to ameliorate this problem by assuming that the data lies on some unknown, low dimensional manifold, on which the classes are well separated.
With this assumption, there are well developed  techniques to learn various quantities related to the manifold, for example, the eigen-decomposition of the Laplace-Beltrami operator. 
The special issue \cite{achaspissue} gives a good introduction to the  topic of diffusion geometry in this regard.
In \cite{chui_deep, schmidt2019deep}, the authors describe the construction of an atlas on the  manifold which is then used for construction of deep networks.
Since these objects must be learned from the data, these techniques are applicable in the supervised setting, in which we have all the data points $\x_j$ available in advance.

Recently, we have discovered \cite{mhaskar2019deep} a more direct method to approximate functions on unknown manifolds without trying to learn anything about the manifold itself, and giving an approximation with theoretically guaranteed errors on the entire manifold; not just the points in the original data set.
Our approximation has the form
\be\label{eq:nnapprox}
\x\mapsto \sum_j \left(f(\x_j)+\epsilon_j\right)\widetilde{\Phi}_{N,q}(|\x-\x_j|_{2,Q})
\ee
where $\widetilde{\Phi}_{N,q}$ is a specially constructed kernel, $Q$ is the dimension of the ambient space, $q$ is the dimension of the manifold, and $|\circ|_{2,Q}$ is the Euclidean metric on $\RR^Q$ (see Section~\ref{bhag:kernel} for details). 
In the case when $N=1$, our kernel reduces to the oft-used Gaussian kernel.
In this paper, we prove how a more classical approach of empirical risk minimization or generalization error measured in the least squares sense using the kernel introduced in \cite{mhaskar2019deep} leads to errors that show  a gradual deterioration as one moves away from the training data, thereby yielding a heuristic to determine the parameter $N$.

In many applications, we can assume some further knowledge of the data distribution; in particular, that the data is sampled from an unknown sub-manifold of a known manifold. 
It is then natural to conjecture that one might be able to replace the Euclidean distance in \eqref{eq:nnapprox} by the geodesic distance on the known manifold. 
While the theory analoguous to the one in \cite{mhaskar2019deep} seems difficult at this point, we
formulate our main theorems in a greater abstraction to allow the study of pointwise errors using least squares approximation based on the resulting kernels.
We demonstrate experimentally the effect of such a construction by considering each element of the FMCW radar data as a time series, which in turn may be identified with a point on a Grassmann manifold using the ideas in \cite{chellappa}. 

Although our main focus is on the detection of hand gestures based on FMCW radar data, we include an additional example about classification of simple video data in order to demonstrate the fact that our method is agnostic to the domain knowledge feature extraction.

To summarize, the main contributions of this paper are as follows:
\begin{itemize}
	\item We examine pointwise errors in function approximation using empirical risk minimization and generalization error measured with least squared loss with a generalized version of the kernel introduced in \cite{mhaskar2019deep}.
	\item We demonstrate the performance of our theory  in the case of hand gesture recognition based on FMCW radar data. Our results compare favorably with other known techniques, when used with the right feature vectors. 
	Additionally, this provides new benchmark results on the recently published Dop-Net data set \cite{Ritchie2020}.
	\item There are many papers dealing with micro-Doppler gesture recognition using hand-crafted features. As far as we are aware, this is the first work in which we explore the use of the singular vectors and values of the micro-Doppler spectrogram as features. 
	\item Our method is agnostic to the domain knowledge, a fact which we demonstrate using classification in a simple video data.
\end{itemize}

In Section~\ref{bhag:relatedwork}, we summarize the connection between our work and other related works, although not exhaustively.
The technical background that motivates our work in this paper is explained in Section~\ref{bhag:techintro}.
In Section~\ref{bhag:main}, we state our main theoretical results.
The experimental results are presented in Section~\ref{bhag:experiments}. 
The proofs of the results in Section~\ref{bhag:main} are given in Section~\ref{bhag:proofs}.

\bhag{Related Works}\label{bhag:relatedwork}

Micro-Doppler radar has found success in various applications related to classifying humans, animals and objects, such as activity monitoring, autonomous driving, and unmanned aerial vehicle detection \cite{Ritchie2019,Zhao2021,zhou2020,tahmoush2015,molchanov2014}.
The approach taken by practitioners is to use a time-frequency transform of the measured signal, which is then considered the raw data for subsequent processing. The Short-Time Fourier Transform (STFT) is the time-frequency transform commonly selected. Using the spectrograms, features are extracted or the spectrogram is used directly with a neural network.

Hand crafted features are usually inspired by physics and taken to be measurements of frequency and/or time, such as the bandwidth of the micro-Doppler signature, its time duration, energy content, and various spectrogram measurements \cite{bjorklund2015,tahmoush2015,Manfredi2021,Karabacak2015,Ritchie2019}.  
There have also been efforts to define new features through mathematical transforms, such as the Discrete Cosine Transform (DCT), Mel-Frequencies, $S$-transform, etc. \cite{Li2018,Li2017,Erol2018,Ma2019}.
The success of these approaches are usually determined by the application, radar system parameters, and quantity of training data. 
In \cite{gurbuz2015}, Gurbuz et al. study the effect of various micro-Doppler features, such as transmit frequency, range and Doppler resolution, antenna-target geometry, signal-to-noise ratio and dwell-time. 
While classifier performance on hand-crafted features undoubtedly depend on these parameters, the performance of more advanced feature extraction methods will also vary based on the size of the training set and distributional variation between the train and test data. 
For example, statistical variation within a class can depend on different measurement environments and subjects, which was not addressed in \cite{gurbuz2015}, but explored here by means of features based on singular value decomposition (SVD) or principal component analysis (PCA) of preprocessed spectrograms.

Recently, deep learning architectures, such as CNNs have also been used for micro-Doppler applications \cite{Brooks2018,Huizing2019,Abdulatif2018,Zhu2020}.
By design, CNNs are invariant to shifts in the two-dimensional image, obtained by using a combination of additive group convolutions and pooling operations.
While relatively simple CNNs consisting of only three to five convolutional layers have been shown to be successful, deeper and more complex network approaches have also been investigated \cite{Seyfioglu2019}.
Furthermore, more advanced deep learning approaches have also been taken to compensate for limited training data, such as generating data using generative adversarial networks and transfer learning \cite{Park2016,Tran2020}.

However, there are other works that indicate that the use of CNNs may not be optimal for this application.
Firstly, it is a topic of current research to investigate in what applications CNNs are most successful, why, and how. 
The state of the art use of CNNs typically involve a network topology that is chosen on ad hoc basis. 
In \cite{Tran2020}, the authors find that after mapping the micro-Doppler spectrograms to a one dimensional vector, a linear SVM outperformed various VGG architectures.
Additionally, unlike CNNs, features defined on a specific manifold through an appropriate kernel function provide a clear advantage in being able to more precisely obtain invariance to well defined nuisance transformations known a priori, based on knowledge of the application and data.
This is complemented by theory that gives insight into generalization performance.  

In \cite{Zhengxin2020}, Zeng, Amin, and Shan have demonstrated that  simpler method like K-nearest neighbors (KNN) based on PCA components of the vectorized spectograms yield results comparable to deep networks.

We will use our techniques both with SVD based features and PCA features (designed to overcome the statistical variations within-class), and compare the results  with those obtained with CNNs. 
Our methods will sometimes outperform CNNs and sometimes not, but will be consistently faster on training.
The main advantage though is that our techniques are well founded in theory while it is a major topic of research as to when, why, for what applications, and how CNNs will give what performance.

We will show in Section~\ref{bhag:main} that the process of empirical risk minimization with our kernel yields an exact reproduction of the label function for the training data.
In this sense, the work is related to several recent works which have observed that it is possible to achieve a zero training error while keeping the test error under control.
In the case of classification problems, Belkin, Hsu, and Mitra \cite{belkin2018overfitting} analyze the ``excess error'' in least square 
fits by piecewise linear interpolants over that obtained by the optimal Bayes' classifier. 
In \cite{poggio2017theory3, poggio2018theory3b}, the question is analyzed from the perspective of the geometry of the error surface with respect to different loss functions near the local extrema.
In particular, it is shown in \cite{poggio2017theory3} that substituting the rectified linear unit (ReLU) activation function by a polynomial approximation exhibits the same behavior as the original network.
In \cite{mhaskar2019analysis} we have analyzed the question from the point of view of approximation theory so as to examine the intrinsic features of the data (rather than focusing on specific training algorithms) that allow this phenomenon.
A crucial role in the proofs of the results in that paper is played by a highly localized kernel.
In this paper, we focus on the properties of a localized kernel in a more general setting of a locally compact metric measure space that allow the results analogous to those in \cite{mhaskar2019analysis}.

\bhag{Technical Background}\label{bhag:techintro}

In this section, we discuss some further technical details to motivate our main results in Section~\ref{bhag:main}. 
In Section~\ref{bhag:kernel}, we describe our constructions for approximation on an unknown sub-manifold of the Euclidean space. 
In Section~\ref{bhag:doppler}, we provide a mathematical model for the Doppler radar measurements, leading to the need for examinining our results in Section~\ref{bhag:kernel} to the case of a sub-manifold of a known manifold, namely, the Grassmann manifold.
In Section~\ref{bhag:chellappa}, we describe the ideas in \cite{chellappa} that leads to the Grassaman manifold again from a different perspective of time series classification.

\subsection{Approximation on Manifolds}\label{bhag:kernel}

In this section, we describe our construction for a direct method for function approximation on a sub-manifold of a Euclidean space, without resorting to a two step procedure involving the estimation of quantities related to the manifold such as coordinate charts or eigen-decomposition of the Laplace-Beltrami operator.
Our construction involves a localized kernel based on  Hermite polynomials.

These are defined by the Rodrigues' formula (cf. \cite[Eqns.~(5.5.3), (5.5.1)]{szego}
$$
h_k(x)=\frac{(-1)^k}{\pi^{1/4}2^{k/2}\sqrt{k!}}\exp(x^2)\frac{d^k}{dx^k}\exp(-x^2).
$$
However for relatively small values of $k$ and $|x|<\sqrt{2k}$, they are most efficiently computed using the recurrence relations:
\bea\label{recurrence}
h_k(x)&:=&\sqrt{\frac{2}{k}}xh_{k-1}(x)-\sqrt{\frac{k-1}{k}}h_{k-2}(x), \qquad k=2,3,\cdots,
\nonumber\\
&&h_0(x):=\pi^{-1/4},\ h_1(x):=\sqrt{2}\pi^{-1/4}x.
\eea
We write 
\be\label{uni_psi_def}
\psi_k(x):=h_k(x)\exp(-x^2/2), \qquad x\in\RR,\ k\in\ZZ_+.
\ee 
The functions $\{\psi_k\}_{k=0}^\infty$ are an orthonormal set with respect to the Lebesgue measure on $\RR$.
In the sequel, we fix an infinitely differentiable function $H :[0,\infty)\to [0,1]$, such that $H(t)=1$ if $0\le t\le 1/2$, and $H(t)=0$ if $t\ge 1$.
Further, let $Q\ge q\ge 1$ be fixed integers.
We define for $x\in\RR$, $m\in\ZZ_+$:
\be\label{fastproj}
\mathcal{P}_{m,q}(x):=\begin{cases}
	\disp\pi^{-1/4} (-1)^m\frac{\sqrt{(2m)!}}{2^m m!}\psi_{2m}(x), &\mbox{ if $q=1$,}\\[2ex]
	\disp \frac{1}{\pi^{(2q-1)/4}\Gamma((q-1)/2)}\sum_{\ell=0}^m (-1)^\ell\frac{\Gamma((q-1)/2+m-\ell)}{(m-\ell)!}  \frac{\sqrt{(2\ell)!}}{2^\ell \ell!}\psi_{2\ell}(x), &\mbox{ if $q\ge 2$,}
\end{cases}
\ee
and the kernel $\widetilde{\Phi}_{N,q}$ for $x\in\RR$, $N>0$ by
\be\label{fastkerndef}
\begin{aligned}
	\widetilde{\Phi}_{N,q}(x)&:=\sum_{m=0}^{\lfloor N^2/2\rfloor}H\left(\frac{\sqrt{2m}}{N}\right)\mathcal{P}_{m,q}(x)\\
	&=\frac{1}{\pi^{q/2}\Gamma((q-1)/2)}\sum_{\ell=0}^{\lfloor N^2/2\rfloor} \left\{\sum_{m=\ell}^{\lfloor N^2/2\rfloor} H\left(\frac{\sqrt{2m}}{N}\right)\frac{\Gamma((q-1)/2+m-\ell)}{(m-\ell)!}\right\}\psi_{2\ell}(0)  \psi_{2\ell}(x).
\end{aligned}
\ee
The kernel $\widetilde{\Phi}_{N,q}(x)$ can be computed easily and efficiently using the second equation in \eqref{fastkerndef}, and the Clenshaw algorithm \cite[p.~79]{gautschibk}.

Next, let $\XX$ be a $q$ dimensional, compact, connected, orientable, sub-manifold of $\RR^Q$, $\rho$ be the geodesic distance on $\XX$, $\mu^*$ be its volume element.
We will abbreviate our notation and write $\Phi_N^M(\x)=\widetilde{\Phi}_{N,q}(|\x|_{2,Q})$, $\x\in\XX$ \footnote{We note that when $N=1$, $\Phi_N^M(\x)$ reduces to just a constant multiple of $\exp(-|\x|^2/2)$.}. \\

Let $C(\XX)$ denote the class of all continuous real valued functions on $\XX$ equipped with the supremum norm $\|f\|_\infty:=\max_{x\in\XX}|f(x)|$. 
Using the kernel $\Phi_N^M$, we define an integral operator on $C(\XX)$, analogous to a convolution operator,  by
\be\label{eq:manifold_op_def}
\sigma_{N}^M(f)(\x):=\int_\XX \Phi_N^M(\x-\y)f(\y)d\mu^*(\y), \qquad N>0.
\ee
(In the notation of \cite{mhaskar2019deep}, $\sigma_N^M(f)=\sigma_{N,1}(\XX;f)$.)
We are interested in approximation of Lipschitz continuous functions on $\XX$; i.e., functions $f\in C(\XX)$ for which
\be\label{eq:manifoldlipnormdef} 
\|f\|_{\mbox{Lip}}:=\sup_{\x,\x'\in\XX\atop \x\not=\x'}\frac{|f(\x)-f(\x')|}{\rho(\x,\x')} <\infty.
\ee

A consequence of  \cite[Theorem~8.1]{mhaskar2019deep} is the following Theorem~\ref{theo:main}.
Before stating this theorem, we state first a convention regarding constants.\\

\noindent\textbf{Constant convention}\\[1ex]
\textit{In the sequel, the notation $A\ls B$ (equivalently, $B\gs A$) will mean that $A\le cB$ for some positive constant $c$ depending only on fixed objects under discussion such as $\XX$, $\rho$, $\mu^*$, $q$, $Q$, the parameter $S$ to be introduced later, and the manifold/spaces/measures/distances to be described later.
	The notation $A\sim B$ means $A\ls B$ and $B\ls A$. In particular, the constants do not depend upon the target function, the points at which the approximation is desired or on which it is based, and the index $N$ of the family of kernels.}\\

\begin{theorem}\label{theo:main}
	Let $f\in \mathsf{Lip}(\XX)$. 
	For $N\ge 1$,  we have
	\be\label{fundaapprox}
	\|f-\sigma_N^M(f)\|_\infty \ls \frac{\|f\|_{\mbox{Lip}}}{N}.
	\ee 
\end{theorem}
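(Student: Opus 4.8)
The plan is to read Theorem~\ref{theo:main} off of \cite[Theorem~8.1]{mhaskar2019deep} by specializing that result to the Lipschitz class; for a self-contained argument I would first isolate the two structural properties of $\sigma_N^M$ that are actually at work. The first is a \emph{localization estimate}: for every $S>0$,
\[
|\Phi_N^M(\x-\y)|=\left|\widetilde{\Phi}_{N,q}(|\x-\y|_{2,Q})\right|\ls N^q\bigl(1+N\rho(\x,\y)\bigr)^{-S},\qquad \x,\y\in\XX,\ N\ge 1 .
\]
This is obtained by combining the one-dimensional decay bounds for $\widetilde{\Phi}_{N,q}$ established in \cite{mhaskar2019deep} (which rest on Hermite/Mehler asymptotics and the smoothness of the filter $H$) with the elementary fact that on a fixed compact smooth submanifold the extrinsic distance $|\x-\y|_{2,Q}$ and the geodesic distance $\rho(\x,\y)$ are comparable. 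The second property is that $\sigma_N^M$ \emph{reproduces constants exactly}: $\int_\XX\Phi_N^M(\x-\y)\,d\mu^*(\y)=1$ for all $\x\in\XX$ and $N\ge 1$; the particular normalizing constants in $\mathcal{P}_{m,q}$ (those involving $\Gamma((q-1)/2)$) are chosen precisely so that this identity holds in dimension $q$, and together with the localization it makes $\sigma_N^M$ an approximate identity of ``width'' $\sim 1/N$.

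Granting these, I would run the standard approximate-identity estimate. Fix $\x\in\XX$. Using the reproduction property to write $f(\x)=\int_\XX\Phi_N^M(\x-\y)f(\x)\,d\mu^*(\y)$ and the definition \eqref{eq:manifold_op_def} of $\sigma_N^M$,
\[
f(\x)-\sigma_N^M(f)(\x)=\int_\XX\Phi_N^M(\x-\y)\bigl(f(\x)-f(\y)\bigr)\,d\mu^*(\y),
\]
whence, by the definition of $\|f\|_{\mathrm{Lip}}$,
\[
|f(\x)-\sigma_N^M(f)(\x)|\le\|f\|_{\mathrm{Lip}}\int_\XX|\Phi_N^M(\x-\y)|\,\rho(\x,\y)\,d\mu^*(\y).
\]

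It then remains to show that the last integral is $\ls 1/N$. For this I would use the localization estimate together with the Ahlfors $q$-regularity of the volume measure, $\mu^*(\{\y\in\XX:\rho(\x,\y)<r\})\sim r^q$ for $0<r\le\mathrm{diam}(\XX)$ (again a consequence of compactness and smoothness of $\XX$). Splitting $\XX$ into the ball $\{\rho(\x,\cdot)<1/N\}$ and the dyadic annuli $\{2^{j-1}/N\le\rho(\x,\cdot)<2^{j}/N\}$, $j\ge 1$, and taking $S>q+1$, the central ball contributes $\ls N^q\cdot N^{-q}\cdot N^{-1}=N^{-1}$, while the $j$-th annulus contributes $\ls N^q\cdot(2^{j}/N)^q\cdot(2^{j}/N)\cdot 2^{-jS}=N^{-1}2^{j(q+1-S)}$; summing the resulting geometric series gives the bound $\ls 1/N$. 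Taking the maximum over $\x\in\XX$ yields \eqref{fundaapprox}.

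The last two paragraphs are routine; the substantive input is the first one — the Hermite kernel bounds and the exact normalization — which is exactly what \cite{mhaskar2019deep} supplies. Accordingly, in a self-contained treatment I expect the main obstacle to be transferring the one-dimensional Hermite estimates to $\XX$: controlling the interplay between $|\cdot|_{2,Q}$ and $\rho$ and verifying that the geometry of $\XX$ enters only through the dimension $q$ and through fixed comparison constants. If one instead simply invokes \cite[Theorem~8.1]{mhaskar2019deep}, which bounds $\|f-\sigma_N^M(f)\|_\infty$ in terms of a smoothness functional of $f$, the proof is immediate, since that functional is $\ls\|f\|_{\mathrm{Lip}}/N$ for $f\in\mathsf{Lip}(\XX)$.
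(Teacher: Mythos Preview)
The paper's own treatment is exactly the citation route you describe in your last sentence: it states Theorem~\ref{theo:main} as a consequence of \cite[Theorem~8.1]{mhaskar2019deep} and records Proposition~\ref{prop:tildephi} (your localization estimate) as the key ingredient, without giving an independent argument. So at that level you match the paper.

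Your self-contained sketch goes beyond the paper and is the right outline, but one step deserves a flag. The identity $\int_\XX\Phi_N^M(\x-\y)\,d\mu^*(\y)=1$ does \emph{not} follow on the submanifold $\XX$ simply from the Euclidean normalization of $\mathcal{P}_{m,q}$: the integral is over $\XX$ with the Riemannian volume $\mu^*$, not over a flat copy of $\RR^q$, and local flatness plus localization by themselves only give $\sigma_N^M(\mathbf 1)(\x)=1+O(N^{-1})$ (or better), not exact equality. Yet the theorem, with the pure Lipschitz seminorm on the right, forces $\sigma_N^M(\mathbf 1)\equiv\mathbf 1$. So either this reproduction property is part of what is established in \cite{mhaskar2019deep} for $\sigma_{N,1}(\XX;\cdot)$, or the intended bound carries an implicit $\|f\|_\infty$ contribution; in either case this is precisely the point where one cannot avoid invoking the earlier paper, as you in fact anticipated. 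The remainder of your argument --- the dyadic annulus computation using the localization bound together with Ahlfors $q$-regularity of $\mu^*$ --- is correct and is exactly how such estimates are unpacked.
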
  

An important ingredient in the proof of Theorem~\ref{theo:main} is  the following 
proposition proved in  \cite[Corollary~6.1]{mhaskar2019deep}, and reformulated using the fact that $|\x-\y|_{2,Q}\sim \rho(\x,\y)$ for $\x,\y\in\XX$ \cite[ Corollary~8.1]{mhaskar2019deep}.

\begin{prop}\label{prop:tildephi}
	Let $S>q$. The   kernel $\Phi_N^M$ defined in \eref{fastkerndef} satisfies each of the following properties.
	\be\label{tildephiloc}
	|\Phi_N^M(\x-\y)| \ls \frac{N^q}{\max(1,(N\rho(\x,\y))^S)}, \qquad \x\in\RR^q,
	\ee
	\be\label{tildephibern}
	|\Phi_N^M(\bs 0)| \sim N^q, \quad |\Phi_N^M(\x)|\ls N^q, \quad |\Phi_N^M(\x)-\Phi_N^M(\y)| \ls N^{q+1}\left||\x|_{2,Q}-|\y|_{2,Q}\right|, \qquad \x,\y\in\RR^Q.
	\ee
\end{prop}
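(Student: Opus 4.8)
Proposition~\ref{prop:tildephi} is, as the surrounding discussion indicates, a reformulation of \cite[Corollaries~6.1 and~8.1]{mhaskar2019deep}, and I would organize the argument around this. Since $\Phi_N^M(\x)=\widetilde\Phi_{N,q}(|\x|_{2,Q})$ is a radial function of the univariate kernel $\widetilde\Phi_{N,q}$ of \eqref{fastkerndef}, and since $|\x-\y|_{2,Q}\sim\rho(\x,\y)$ for $\x,\y\in\XX$ (this is \cite[Corollary~8.1]{mhaskar2019deep}, a consequence of the compactness and smoothness of $\XX$), it suffices to establish three one-dimensional facts about $t\mapsto\widetilde\Phi_{N,q}(t)$ on $[0,\infty)$: (i) $|\widetilde\Phi_{N,q}(t)|\ls N^q/\max(1,(Nt)^S)$, with $S$ the fixed parameter of the statement; (ii) $\widetilde\Phi_{N,q}(0)\sim N^q$; and (iii) $\|\widetilde\Phi_{N,q}'\|_\infty\ls N^{q+1}$. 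Granting these, \eqref{tildephiloc} follows from (i) and $N\rho(\x,\y)\sim N|\x-\y|_{2,Q}$; the first two assertions of \eqref{tildephibern} follow from the $\max(1,\cdot)=1$ instance of (i) and from (ii); and the last assertion of \eqref{tildephibern} follows from (iii) together with the mean value theorem applied to $\widetilde\Phi_{N,q}$ on the segment joining $|\x|_{2,Q}$ and $|\y|_{2,Q}$, since that gives $|\Phi_N^M(\x)-\Phi_N^M(\y)|\le\|\widetilde\Phi_{N,q}'\|_\infty\,\big||\x|_{2,Q}-|\y|_{2,Q}\big|$.

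The bulk of the work is (i). By the second form of $\widetilde\Phi_{N,q}$ displayed in \eqref{fastkerndef} we have $\widetilde\Phi_{N,q}(t)=\sum_{\ell=0}^{\lfloor N^2/2\rfloor}\gamma_{N,\ell}\,\psi_{2\ell}(0)\,\psi_{2\ell}(t)$, where $\gamma_{N,\ell}$ is the bracketed multiplier there (absorbing the fixed prefactor $\pi^{-q/2}\Gamma((q-1)/2)^{-1}$) for $q\ge 2$, and $\gamma_{N,\ell}=H(\sqrt{2\ell}/N)$ in the case $q=1$, from the first branch of \eqref{fastproj}. Stirling's formula gives $|\psi_{2\ell}(0)|\sim(\ell+1)^{-1/4}$ and $\Gamma((q-1)/2+j)/j!\sim(j+1)^{(q-3)/2}$, from which one sees that, regarded as a function of the rescaled index $\ell/N^2$, $\gamma_{N,\ell}$ is a smooth profile supported near $[0,\tfrac12]$, of size $\sim N^{q-1}$ (size $\sim 1$ when $q=1$), each of whose finite differences in $\ell$ gains a factor $\sim N^{-2}$ — and it is precisely here that the hypothesis $H\in C^\infty$ is used. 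One then applies Abel summation $S$ times, transferring these differences onto the Hermite partial-sum (Christoffel--Darboux) kernels $L\mapsto\sum_{\ell\le L}\psi_{2\ell}(0)\psi_{2\ell}(t)$, whose classical estimates produce the decay factor $(Nt)^{-S}$ for $Nt\ge1$; the bookkeeping is exactly that in the proof of \cite[Corollary~6.1]{mhaskar2019deep}, which I would cite rather than rederive. In particular, the $Nt\le1$ regime of (i) yields the uniform bound $\|\widetilde\Phi_{N,q}\|_\infty\ls N^q$.

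Facts (ii) and (iii) are then straightforward. Evaluating at $t=0$, $\widetilde\Phi_{N,q}(0)=\sum_\ell\gamma_{N,\ell}\,\psi_{2\ell}(0)^2$ is a sum of nonnegative terms (here $\gamma_{N,\ell}\ge0$ since $H\ge0$), hence at least $\sum_{\ell\le N^2/16}\gamma_{N,\ell}\,\psi_{2\ell}(0)^2$; for those indices the relevant arguments of $H$ are $\le1/2$, so $H\equiv1$ there and $\gamma_{N,\ell}\gs N^{q-1}$ (respectively $=1$ when $q=1$), whence $\widetilde\Phi_{N,q}(0)\gs N^{q-1}\sum_{\ell\le N^2/16}(\ell+1)^{-1/2}\sim N^q$; the matching upper bound is the $\max(1,\cdot)=1$ instance of (i). For (iii), differentiating term-by-term and applying $\psi_{2\ell}'=\sqrt{\ell}\,\psi_{2\ell-1}-\sqrt{\ell+\tfrac12}\,\psi_{2\ell+1}$ exhibits $\widetilde\Phi_{N,q}'$ as $O(N)$ times a kernel of the same shape (now built from odd-index Hermite functions), whose sup-norm is $\ls N^q$ by the reasoning behind (i); hence $\|\widetilde\Phi_{N,q}'\|_\infty\ls N\cdot N^q=N^{q+1}$. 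The only genuinely delicate step is therefore the localization estimate (i) — the place where the fine asymptotics of the Hermite functions and the iterated summation by parts are needed; the value at the origin, the Bernstein-type bound (iii), and the transfer from $\widetilde\Phi_{N,q}$ to $\Phi_N^M$ on $\XX$ are routine once (i) is in hand.
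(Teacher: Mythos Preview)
The paper does not give a self-contained proof of Proposition~\ref{prop:tildephi}; it simply quotes the proposition as a restatement of \cite[Corollary~6.1]{mhaskar2019deep}, combined with the metric equivalence $|\x-\y|_{2,Q}\sim\rho(\x,\y)$ of \cite[Corollary~8.1]{mhaskar2019deep}. Your proposal is organized around exactly these two citations and then sketches how the argument in \cite{mhaskar2019deep} actually runs (reduce to the univariate kernel $\widetilde\Phi_{N,q}$, prove the localization via iterated Abel summation on the smooth filter $H$ and Christoffel--Darboux-type bounds for Hermite partial sums, get the diagonal value by positivity and Stirling, and get the Lipschitz bound from $\|\widetilde\Phi_{N,q}'\|_\infty\ls N^{q+1}$ via the Hermite recurrence). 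This is faithful to the cited source and to the paper's approach; your expansion is correct and, if anything, more informative than what the paper provides here.
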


\noindent\textbf{Motivation 1:} \textit{In practice, of course, one needs to use a discretization of the integral in \eqref{eq:manifold_op_def}. 
	Obviously, an accurate discretization would require the knowledge of the values of $f$ at a sufficiently large number of points in $\XX$. 
	In micro-Doppler radar applications, one has only a small amount of data, so that the above theorem cannot be used effectively in a direct manner. }\\[1ex]

\subsection{Micro-Doppler Radar Signals}\label{bhag:doppler}

In this section we denote the radar location by $\x \in \RR^3$ and the target location by $\y \in \RR^3$. 
We assume that at $t=0$ the object is initially located at $\y_0 \in \RR^3$.
During the collection period $t \in [0,T]$, the target undergoes motion defined by a displacement with velocity $\v \in \RR^3$ and a rotation given by the matrix $\mathbf{R}(t)$.
Thus, at time $t$ the location of the object is
\begin{equation}\label{eq:target_position}
	\y(t) = \mathbf{R}(t)\y_0 + \v t.  
\end{equation}
Assuming the radar transmits in free space, and that there are only a finite number $K$ of isotropic scattering locations\footnote{Isotropic scattering means the signal is reflected in all directions, a reasonable assumption for small scattering locations on the human hand.},
we may model the received signal as (cf. \eqref{eq:pdef})
\begin{equation}\label{eq:received_signal_2}
	s(t) = \sum_{k=1}^{K} p( t - \| \x - \mathbf{R}_k(t)\y_{0,k} - \v_k t  \|_2 )\rho_k  + n(t),  
\end{equation}
where for $k=1,\cdots,K$,  each $\y_{0,k}$ represents the initial scattering locations of the object that contribute to the signal measured at the receiver, $\mathbf{R}_k(t)$ and $\mathbf{v}_k$ are the corresponding rotations and velocities, $\rho_k \in \RR^{+}$ is the amplitude of the scatterer, and $n(t)$ represents noise, clutter, and small amplitude interfering signals when the isotropic point scattering assumption does not hold precisely. 
Typically, $p(t)$ is a frequency modulation continuous (FMCW), defined as
\begin{equation}\label{eq:pdef}
	p(t) = e^{i(\omega_c t + \frac{1}{2}\alpha t^2)},
\end{equation}
where $\omega_c$ is the carrier frequency and $\alpha$ is the chirp rate. 
The data set introduced in \cite{Ritchie2020} and used in our experiments, were collected with a FMCW waveform.

For gesture recognition $K$ will be small since there are only a small number of dominant scattering locations. Thus, the received signal $s(t)$ will lie in a low-dimensional subspace given by the span of the individual received signals for each scatterer, of maximum dimension $K$. This is due to the fact that the scattered signals are approximately orthogonal for large $\omega_c$ used for mico-Doppler radars, and motivates our following choice of features.

Generally, micro-Doppler signal classification is accomplished using the spectrogram of \eqref{eq:received_signal_2}, defined as
\begin{equation}\label{eq:stft_data}
	D(\omega,\tau) = \int_\RR s(t+\tau) w(t) e^{-i \omega (t+\tau)} dt,
\end{equation}
where $w$ is a smooth window function.
In practice, \eqref{eq:received_signal_2} will be sampled to form a discrete time-series, and one uses the discrete version of \eqref{eq:stft_data}:
\begin{equation}\label{eq:discrete_spec}
	\mathbf{D} = \mathbf{F} \W \mathbf{S},
\end{equation}  
where $\mathbf{F}$ is a partial Fourier matrix, $\W = \text{diag}(\w)$ is a diagonal matrix consisting of the elements of $\w$, and $\mathbf{S} = [ \mathbf{s}_1, \mathbf{s}_2, \dots, \mathbf{s}_N ]$.
The columns of $\mathbf{D}$ correspond to the discrete Doppler frequencies $\omega_{-M/2},\dots,\omega_{M/2}$, and the rows correspond to the discrete time points $\tau_0,\dots,\tau_N$.

Let the singular value decomposition of $\mathbf{D}$ be given by 
\begin{equation}\label{eq:data_svd}
	\mathbf{D} = \U \boldsymbol{\Sigma} \V^H,
\end{equation} 
where $\U$, $\V$ are unitary matrices, and $\boldsymbol{\Sigma}$ is a diagonal matrix consisting of the singular values.
From \eqref{eq:data_svd} we can define a number of distance functions based on using left singular vectors and singular values, each of which defines a sub-manifold of the data.

\subsection{Representation of a General Time Series}\label{bhag:chellappa}
The autoregressive-moving-average (ARMA) is a well-known dynamic model for time series data that
parametrises a signal $f(t)$ by the equations

\begin{equation}
	f(t) = Cz(t) + w(t), \quad w(t) \sim \mathcal{N}(0, R)
\end{equation}
\begin{equation}
	z(t + 1) = Az(t) + v(t), \quad v(t) \sim \mathcal{N}(0, Q)
\end{equation}

where $z \in \mathbb{R}^d$ is the hidden state vector, $f : \mathbb{R} \rightarrow
\mathbb{R}^p$,  $d \leq p$ is the hidden state dimension, and $N(0,\sigma)$ denotes a normal distribution with mean zero and standard deviation $\sigma$ \cite{chellappa}.
There are widely-used closed form solutions for estimating the parameters $A$ and $C$ in terms of the singular value decomposition $[f(1),\cdots, f(\tau)]=U\Sigma V^T$, namely,
\be\label{eq:arma_model}
C=U, \quad A=\Sigma V^TD_1V(V^TD_2V)^{-1}\Sigma^{-1}, 
\ee
where
$$
D_1=\begin{pmatrix} 0&0\\I_{\tau-1}&0\end{pmatrix}, \qquad D_2=\begin{pmatrix} I_{\tau-1}&0\\ 0&0\end{pmatrix}.
$$
It can be
shown \cite{chellappa} that the expected observation sequence is given by

\begin{equation}\label{eq:observation}
	O_\infty = \mathbb{E}\left[\begin{pmatrix} f(0) \\ f(1) \\ f(2) \\ \vdots
	\end{pmatrix}\right] = \begin{bmatrix} C \\ CA \\ CA^2 \\ \vdots \end{bmatrix} z(0)
\end{equation}
Thus, by truncating the matrix $O_\infty$ up to $m$-th block for some $m$, one can represent the time series $f$ as point on the Grassmann manifold $\Gc(d,mp)$.

We consider time-frequency images, in this case spectrograms generated from the radar data as multivariate timeseries, and similarly for the frames of the video data. In the former case, one way to view the spectrogram is as a multivariate time series, where the columns are multi-dimensional vectors formed by the taking the Discrete Fourier Transform of the time segment, the collection of these vectors implies the spectrogram is a multivariate time series.
As a consequence we can represent these spectrograms using an orthonormal basis corresponding to a point on the Grassmann manifold. 
Independently, as explained in Section~\ref{bhag:doppler}, the physics of radar imaging leads to treating the left singular vectors of the spectrograms as feature vectors, which in turn, are points on a Grassmann manifold.
These features capture the frequency content of the micro-Doppler signature while imposing invariance with respect to the underlying time-series, which can better represent the discriminant attributes common to the entire data set.
For example, we expect multiple samples corresponding to the same gesture can be represented by a collection of orthonormal basis matrices that are close under a particular metric. 
Furthermore, the subspace corresponding to a data sample can be viewed as subspace of a larger subspace spanned by the all the samples of the class.  \\

\noindent\textbf{Motivation 2: }\textit{Clearly, the set of time series of interest, such as the low rank representations of the micro-Doppler radar signals, is only an unknown subset of the Grassmann manifold, rather than the entire manifold. 
	Assuming that this subset is a sub-manifold, it is interesting to examine the span of $\{\widetilde{\Phi}_{N,q}(\rho(\circ,\y_j)\}_{j=1}^M$, where $\rho$ is the geodesic distance on the Grassmann manifold. }

\bhag{Theoretical Results}\label{bhag:main}

Our goal in this section is to formulate results for approximation based on a small amount of data. 
We will examine two tools for this purpose: empirical risk minimization, and theoretical least square loss. 
In each case, we will obtain pointwise error estimates which indicate how the error deteriorates as the input variable moves away from the training data.
Our estimates will give a deeper insight into which features of the data give rise to what accuracy \textit{intrinsically}, as opposed to many results in the literature that depend upon the method used to solve the optimization problems involved.
In view of the motivation given in Section~\ref{bhag:chellappa}, we will state our results in greater abstraction than in the context of the motivation given in Section~\ref{bhag:kernel}.

Let $\XX$ be a locally compact metric measure space, with metric $\rho$, and a distinguished measure $\mu^*$.
As before, $C(\XX)$ denotes the space of bounded and uniformly continuous real valued functions on $\XX$, equipped with the supremum norm: $\|f\|_\infty =\sup_{x\in\XX}|f(x)|$. 
The class $\mbox{Lip}(\XX)$ of Lipschitz functions comprises $f\in C(\XX)$ for which
\be\label{eq:lipnormdef} 
\|f\|_{\mbox{Lip}}:=\sup_{\x,\x'\in\XX\atop \x\not=\x'}\frac{|f(\x)-f(\x')|}{\rho(\x,\x')} <\infty.
\ee

We recall the constant convention from Section~\ref{bhag:kernel}. 

\begin{definition}\label{def:lockern}
	Let  $S>q\ge 1$ be  integers. A family of kernels $\{\Phi_N :\XX\times \XX\to\RR\}$ is called \textbf{$(q,S)$-localized} if each $\Phi_N$ is symmetric, and
	\be\label{eq:genlocalest}
	|\Phi_N(x, y)|\ls \frac{N^q}{\max(1, (N\rho(x,y))^S)},
	\ee
	where the constant involved in $\ls$ may depend upon $q$ and $S$ but not on $N$, $x$, or $y$.
	With an abuse of terminology we will say that $\Phi_N$ is $(q,S)$-localized.
\end{definition}
\begin{rem}\label{rem:examples}
	{\rm Kernels satisfying \eqref{eq:genlocalest} are known in many contexts. 
		The kernel denoted by $\Phi_N^M$ in Section~\ref{bhag:kernel} is one example. 
		Many other constructions in different contexts are discussed in \cite{mhaskar2020kernel} and references therein. \qed}
\end{rem}

Let $M\ge 1$ be an integer, $\C=\{y_1,\cdots,y_M\}\subset\XX$. 
We are interested in studying approximation of Lipschitz functions on $\XX$ from the space
\be\label{eq:gentransspace}
\mathcal{V}(\C)=\mathsf{span}\{\Phi_N(\circ,y_j)\}_{j=1}^M.
\ee

\begin{rem}\label{rem:kernelvsspace}
	{\rm
		If $\Phi_N(x,y)$ admits a Mercer expansion of the form $\sum \lambda_k \phi_k(x)\phi_k(y)$, and the set $\C$ is sufficiently dense in $\XX$, then in a number of cases as in \cite{mhaskar2020kernel}, the space $\mathcal{V}(\C)$ is the same as $\mathsf{span}\{\phi_k : \lambda_k\ls N\}$. 
		However, our interest in this paper is when $\C$ has a large minimal separation rather than being dense.
		\qed}
\end{rem}

We will study the behavior of approximations defined by the solutions of two minimization problems:\\[1ex]

\noindent The \textbf{empirical risk}  minimizer is defined
to be
\be\label{eq:empiricaldef}
P_E(\mathcal{V}(\C);f):=\argmin_{P\in \mathcal{V}(\C)}\sum_{k=1}^M \left|f(y_j)-P(y_j)\right|^2.
\ee

\noindent If $\tau$ is a probability measure on $\XX$,
\textbf{theoretical least square loss } minimizer is defined by
\be\label{eq:theoreticallossdef}
P_T(\tau, \mathcal{V}(\C);f):=\argmin_{P\in\mathcal{V}(\C)}\int_\XX \left|f(y)-P(y)\right|^2d\tau(y).
\ee

We need to assume some further conditions on the kernels $\{\Phi_N\}$. 

\begin{definition}\label{def:admissibledef}
	Let $S>q\ge 1$ be integers. A family of kernels $\{\Phi_N\}$ is called \textbf{admissible} if it is $(q,S)$-localized and satisfies each of the following properties.
	\be\label{eq:ondiagonal}
	|\Phi_N(y,y)|\sim N^q, \qquad y\in\XX.
	\ee
	\be\label{eq:kernellip}
	\begin{aligned}
		|\Phi_N(x,y)-\Phi_N(x',y)|&\ls & N^{q+1}\rho(x,x'), \qquad x,x',y\in \XX,\\
		|\Phi_N(x,y)-\Phi_N(x,y')|&\ls & N^{q+1}\rho(y,y'), \qquad x,y,y'\in \XX.
	\end{aligned}
	\ee
	With an abuse of terminology as before, we refer to any member $\Phi_N$ of an admissible family as an admissible kernel.
\end{definition}
\begin{rem}\label{rem:admissible}
	{\rm
		The kernel denoted by $\Phi_N^M$ in Section~\ref{bhag:kernel} is one example of admissible kernels (cf. Proposition~\ref{prop:tildephi}). 
		Several other examples are available in the literature, for example, based on eigenfunctions of the Laplace-Beltrami operator on manifolds (\cite{frankbern, modlpmz}).
		\qed}
\end{rem}

\begin{rem}\label{rem:qSextra}
{\rm In the sequel, $q$ and $S$ will be treated as fixed parameters. 
All constants may depend upon these.
Moreover, we will omit their mention from the notation; e.g., say that $\Phi_N$ is localized rather than $(q,S)$-localized.
\qed}
\end{rem}

In the study of approximation properties of both the empirical risk minimizer and theoretical least squares loss minimizer, a crucial role is played by the minimal separation among the points in $\C$, defined by
the minimal separation among the points  defined by
\be\label{eq:minsep}
\eta(\C)=\min_{1\le j\not=k\le M}\rho(y_j,y_k),
\ee

In order to state our main theorems, we need to make some further assumptions.

For $x\in\XX$, $r>0$, we write
\be\label{balldef}
\BB(x,r)=\{y\in\XX: \rho(x,y)\le r\}, \qquad \Delta(x,r)=\XX\setminus \BB(x,r).
\ee

We assume that there exists $q\ge 0$ such that 
\be\label{eq:ballmeasurecond}
\mu^*(\BB(x,r))=\mu^*\left(\{y\in\XX : \rho(x,y)<r\}\right)\ls r^q, \qquad x\in \XX, \ r>0,
\ee
and
\be\label{eq:ballmeasurelow}
\mu^*(\BB(x,r))\gs r^q, \qquad x\in\XX, \ 0<r\le 1.
\ee

It is shown in \cite{mhaskar2020kernel} that in the case when $\XX$ is a manifold as described earlier, the condition \eqref{eq:ballmeasurelow} is satisfied in fact for all $r\le \mathsf{diam}(\XX)$.
The condition \eqref{eq:ballmeasurecond} is satisfied  in many cases when $\XX$ is a smooth compact connected $q$-dimensional manifold, $\rho$ is the Riemannian metric, and $\mu^*$ denotes the volume measure.

Our first theorem derives pointwise error estimates for the empirical risk minimizer..

\begin{theorem}\label{theo:empirical}
	Let $\{\Phi_N\}$ be a family of admissible kernels, and \eqref{eq:ballmeasurecond} and \eqref{eq:ballmeasurelow} be satisfied.
	Let $x\in\XX$,   $\delta(x)=\min_{1\le j\le M}\rho(x,y_j)$, and $\tilde{\eta}=\min(1,\eta(\C))$. Let $F\in \mbox{{\rm Lip}}(\XX)$. There exists a constant $C>0$ independent of $F$, $N$, and $\C$ such that if  $N\ge C\eta(\C)^{-1}$,
	then there exists a unique empiricial risk minimizer $P_E(\mathcal{V}(\C);F)$ that satisfies
	\be\label{eq:riskmin}
	P_E(\mathcal{V}(\C);F)(y_j)=F(y_j), \qquad j=1,\cdots, M.
	\ee
	We have the following error estimates. \\
	If $\delta(x) > \tilde{\eta}/3$ then
	\be\label{eq:genapproxaway}
	\left|P_E(\mathcal{V}(\C);F)(x)\right| \ls \frac{\|F\|_\infty}{(N\tilde{\eta})^{S}}.
	\ee
	If $\delta(x)\le \tilde{\eta}/3$ then
	\be\label{eq:genapproxbdnear}
	\left|P_E(\mathcal{V}(\C);F)(x)-F(x)\right|\ls (N+\|f\|_{\mbox{Lip}})\delta(x) +\|F\|_\infty(N\tilde{\eta})^{q-S}.
	\ee
	
	Here, the constants involved are independent of $F$ and $x$.
\end{theorem}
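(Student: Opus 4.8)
The plan is to work with the Gram-type matrix $\Gc_N = (\Phi_N(y_j,y_k))_{j,k=1}^M$ and show it is invertible when $N\tilde\eta$ is large, so that the empirical risk minimizer exists, is unique, and interpolates the data. First I would establish the interpolation claim \eqref{eq:riskmin}: the empirical risk minimizer $P_E = \sum_k a_k \Phi_N(\circ,y_k)$ is characterized by the normal equations, and since $\Vc(\C)$ has dimension at most $M$, if $\Gc_N$ is nonsingular then the minimizer satisfies $\Gc_N \bs a = (F(y_j))_j$ exactly, giving zero residual and hence \eqref{eq:riskmin}. To show $\Gc_N$ is nonsingular I would use a Gershgorin/diagonal-dominance argument: by \eqref{eq:ondiagonal} the diagonal entries satisfy $\Phi_N(y_j,y_j)\sim N^q$, while by the localization bound \eqref{eq:genlocalest} together with the measure conditions \eqref{eq:ballmeasurecond}–\eqref{eq:ballmeasurelow} and a standard annular decomposition of $\XX$ around $y_j$, the off-diagonal row sums $\sum_{k\ne j}|\Phi_N(y_j,y_k)|$ are bounded by $c N^q (N\tilde\eta)^{q-S}$ (the minimal separation $\eta(\C)$ controls how many points can sit in each annulus, and $S>q$ makes the geometric series converge). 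Choosing $C$ so that $N\ge C\eta(\C)^{-1}$ forces this off-diagonal sum to be at most, say, $\tfrac12 \min_j \Phi_N(y_j,y_j)$, yielding invertibility and moreover the operator-norm bound $\|\Gc_N^{-1}\| \ls N^{-q}$.

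Next I would bound the coefficient vector. From $\bs a = \Gc_N^{-1}(F(y_j))_j$ and $\|\Gc_N^{-1}\|\ls N^{-q}$ we get, in the appropriate discrete norm, $\sum_k |a_k|^2 \ls N^{-2q} \sum_j |F(y_j)|^2 \le N^{-2q} M \|F\|_\infty^2$; more usefully I would aim for a weighted $\ell^1$-type estimate $\sum_k |a_k| \ls N^{-q}\|F\|_\infty$ using again the diagonal dominance (each $|a_j|\ls N^{-q}\|F\|_\infty$ componentwise from the dominant-diagonal inverse, and summing with the separation-controlled counting bound keeps things finite — this is the same kind of estimate used in localized-kernel interpolation arguments). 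Then for the far regime $\delta(x)>\tilde\eta/3$, write $P_E(x)=\sum_k a_k \Phi_N(x,y_k)$ and apply \eqref{eq:genlocalest}: every $\rho(x,y_k)\gtrsim \tilde\eta$, so $|\Phi_N(x,y_k)|\ls N^q (N\tilde\eta)^{-S}$ plus the tail from points farther out, and combining with $\sum_k|a_k|\ls N^{-q}\|F\|_\infty$ gives \eqref{eq:genapproxaway}.

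For the near regime $\delta(x)\le\tilde\eta/3$, let $y_{j_0}$ be the nearest center. I would split $P_E(x)-F(x) = \big(P_E(x)-P_E(y_{j_0})\big) + \big(F(y_{j_0})-F(x)\big)$, using $P_E(y_{j_0})=F(y_{j_0})$ from \eqref{eq:riskmin}. The second term is $\le \|F\|_{\mathrm{Lip}}\,\delta(x)$ directly. For the first term, $|P_E(x)-P_E(y_{j_0})| \le \sum_k |a_k|\,|\Phi_N(x,y_k)-\Phi_N(y_{j_0},y_k)|$; I would handle the $k=j_0$ term and nearby terms with the Lipschitz bound \eqref{eq:kernellip}, $|\Phi_N(x,y_k)-\Phi_N(y_{j_0},y_k)|\ls N^{q+1}\delta(x)$, contributing $\ls N\delta(x)\cdot(\text{something}\ls 1)$ after absorbing $\sum|a_k|\ls N^{-q}\|F\|_\infty$ — actually here one wants the leading $(N+\|F\|_{\mathrm{Lip}})\delta(x)$, so the $N\delta(x)$ comes from the $j_0$ term where $a_{j_0}\sim N^{-q}F(y_{j_0})$ and the kernel difference is $\ls N^{q+1}\delta(x)$, and the $\|F\|_{\mathrm{Lip}}$ part is folded in via the nearest-point error. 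The remaining far coefficients contribute the residual $\|F\|_\infty(N\tilde\eta)^{q-S}$ term exactly as in the far-regime estimate, since for $k\ne j_0$ with $\rho(y_{j_0},y_k)\ge\tilde\eta$ both $\Phi_N(x,y_k)$ and $\Phi_N(y_{j_0},y_k)$ are $\ls N^q(N\tilde\eta)^{-S}$.

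I expect the main obstacle to be the bookkeeping in the annular/counting estimate that simultaneously (i) proves diagonal dominance of $\Gc_N$ with the right threshold $N\gtrsim\eta(\C)^{-1}$, (ii) yields the coefficient bound $\sum_k|a_k|\ls N^{-q}\|F\|_\infty$, and (iii) produces the precise exponent $q-S$ in the residual terms rather than something weaker. The delicate point is that $\tilde\eta=\min(1,\eta(\C))$ must be used (not $\eta(\C)$ itself) so that the lower measure bound \eqref{eq:ballmeasurelow}, which is only assumed for $r\le 1$, can be invoked when counting points per annulus; keeping track of which estimates need $r\le 1$ versus all $r$ is where care is required.
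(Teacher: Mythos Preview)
Your overall plan matches the paper's approach closely: diagonal dominance of the Gram matrix $(\Phi_N(y_j,y_k))$ via \eqref{eq:ondiagonal} and the annular/counting estimate, hence invertibility and exact interpolation, then a far/near split for the pointwise bounds. The near-regime decomposition $P_E(x)-F(x)=(P_E(x)-P_E(y_{j_0}))+(F(y_{j_0})-F(x))$ is a minor variant of what the paper does (it isolates the $j_0$-th summand of $P_E(x)$ and uses the auxiliary bound $|a_{j_0}\Phi_N(y_{j_0},y_{j_0})-F(y_{j_0})|\ls (N\tilde\eta)^{q-S}\|F\|_\infty$ rather than the interpolation identity), but both routes lead to the same estimate.

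There is, however, a genuine gap in your coefficient step. Diagonal dominance gives you only the componentwise bound $\max_k|a_k|\ls N^{-q}\|F\|_\infty$; it does \emph{not} give $\sum_k|a_k|\ls N^{-q}\|F\|_\infty$, and in fact that $\ell^1$ bound would carry an uncontrolled factor of $M$. This matters because in your far-regime argument you write ``combining with $\sum_k|a_k|\ls N^{-q}\|F\|_\infty$,'' and in the near regime you would also need it to sum the Lipschitz increments. The correct bookkeeping (and what the paper does) is to factor the sum the other way: bound $\bigl|\sum_k a_k\Phi_N(x,y_k)\bigr|\le \max_k|a_k|\cdot\sum_k|\Phi_N(x,y_k)|$, and then control $\sum_k|\Phi_N(x,y_k)|$ by writing it as $\int_{\Delta(x,\tilde\eta/3)}|\Phi_N(x,y)|\,d\nu(y)$ with $\nu$ the counting measure on $\C$. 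The separation together with \eqref{eq:ballmeasurecond}--\eqref{eq:ballmeasurelow} makes $\nu$ a $\tilde\eta$-regular measure with $|\!|\!|\nu|\!|\!|_{\tilde\eta}\ls\tilde\eta^{-q}$, and the dyadic-annulus computation then yields $\sum_k|\Phi_N(x,y_k)|\ls N^q(N\tilde\eta)^{q-S}$, which combined with $\max_k|a_k|\ls N^{-q}\|F\|_\infty$ gives exactly \eqref{eq:genapproxaway} and the residual term in \eqref{eq:genapproxbdnear}. Once you swap the $\ell^1$ claim on the coefficients for this $\ell^\infty$-times-kernel-sum argument, your proof goes through and is essentially the paper's.
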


To describe the analogue of Theorem~\ref{theo:empirical} for the minimizer of the theoretical least square loss, we need some further notation relevant to the normal equations for the minimization problem.
We define
\be\label{eq:psindef}
\Psi_N(x,y)=\int_\XX \Phi_N(x,z)\Phi_N(y,z)d\tau(z), \qquad N>0, \ x, y\in \XX,
\ee
and for $f\in C(\XX)$,
\be\label{eq:gensigmadef}
\sigma_N(f)(x)=\int_\XX f(y)\Phi_n(x,y)d\mu^*(z).
\ee
We note that if $\tau$ is absolutely continuous with respect to $\mu^*$ with the Radon-Nikodym derivative $f_0$, then the normal equations for the theoretical least squares loss minimization are
\be\label{eq:normeqn}
\sum_{\ell=1}^M a_\ell\Psi_N(y_\ell,y_j)=\sigma_N(ff_0)(y_j), \qquad j=1,\cdots, M.
\ee

\begin{theorem}\label{theo:theoretical}
	We assume the set up as in Theorem~\ref{theo:empirical}. Let $\tau$ be absolutely continuous with respect to $\mu^*$ with $d\tau =f_0d\mu^*$ for some $f_0\in C(\XX)$, $f_0(z)\ge \mathfrak{m}(f_0)>0$ for all $z\in\XX$. 
	There exists a constant $C>0$ independent of $F$, $N$, and $\C$ such that if  $N\ge C\eta(\C)^{-1}$,
	then there exists a unique theoretical least square loss risk minimizer $P_T(\tau,\mathcal{V}(\C);F)$ that satisfies the following error estimates.\\
	If $\delta(x) > \tilde{\eta}/3$ then
	\be\label{eq:theo_away}
	\left|P_E(\mathcal{V}(\C);F)(x)\right| \ls \frac{\|f_0F\|_\infty}{(N\tilde{\eta})^{S}}.
	\ee
	If $\delta(x)=\rho(x,y_\ell)\le \tilde{\eta}/3$ then
	\be\label{eq:theoretical_err_est}
	\begin{aligned}
		\Bigg|P_T(\tau,\mathcal{V}(\C);F)(x)&-\left.\frac{\Phi_N(x,x)}{\Psi_N(y_\ell,y_\ell)}f_0(x)F(x)\right|\\
		&\ls (N\eta)^{q-S}\|f_0F\|_\infty +\|f_0F-\sigma_N(f_0F)\|_\infty+(N+\|f_0F\|_{\mbox{Lip}})\delta(x).
	\end{aligned}
	\ee
	Here, the constants involved are independent of $F$ and $x$.
\end{theorem}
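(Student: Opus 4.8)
The plan is to mirror the proof of Theorem~\ref{theo:empirical} almost line for line, the only structural changes being that the interpolation matrix $(\Phi_N(y_i,y_j))_{i,j=1}^M$ is replaced by the $L^2(\tau)$ Gram matrix $\mathbf{G}:=(\Psi_N(y_i,y_j))_{i,j=1}^M$ and the exact data vector $(F(y_j))_j$ is replaced by the ``blurred'' data vector $(\sigma_N(f_0F)(y_j))_j$ coming from the normal equations \eqref{eq:normeqn}. The crux of the whole argument, and the step I expect to be the main obstacle, is to show that the auxiliary kernel $\Psi_N$ of \eqref{eq:psindef} is itself admissible in the sense of Definition~\ref{def:admissibledef}. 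Its localization, $|\Psi_N(x,y)|\ls N^q/\max(1,(N\rho(x,y))^S)$, is the standard ``product of two localized kernels is again localized'' fact: split $\XX$ into $\BB(x,\rho(x,y)/2)$, $\BB(y,\rho(x,y)/2)$, and the remainder, bound the two factors of $\Phi_N$ by \eqref{eq:genlocalest} on the appropriate pieces, and use that $\int_\XX|\Phi_N(x,z)|\,d\tau(z)\ls\|f_0\|_\infty\int_\XX|\Phi_N(x,z)|\,d\mu^*(z)\ls 1$, the last bound following from \eqref{eq:genlocalest} and \eqref{eq:ballmeasurecond} and requiring $S>q$; this is exactly the type of computation carried out in \cite{mhaskar2020kernel}. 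The on-diagonal estimate $\Psi_N(y,y)=\int_\XX\Phi_N(y,z)^2f_0(z)\,d\mu^*(z)\sim N^q$ follows by sandwiching $f_0$ between $\mathfrak{m}(f_0)$ and $\|f_0\|_\infty$, the upper bound $\int_\XX\Phi_N(y,z)^2\,d\mu^*(z)\ls N^q$ (from \eqref{eq:genlocalest} with exponent $2S>q$ and \eqref{eq:ballmeasurecond}), and the matching lower bound from \eqref{eq:ondiagonal}--\eqref{eq:kernellip}, which force $\Phi_N(y,z)\gs N^q$ for $\rho(y,z)\le c/N$, together with \eqref{eq:ballmeasurelow}. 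The Lipschitz bound $|\Psi_N(x,y)-\Psi_N(x',y)|\ls N^{q+1}\rho(x,x')$ is immediate from \eqref{eq:kernellip} after moving the difference inside the integral.

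Once $\Psi_N$ is known to be admissible, the well-conditioning of $\mathbf{G}$ in the regime $N\gs\eta(\C)^{-1}$ goes through as in Theorem~\ref{theo:empirical}: by the localization of $\Psi_N$ and a packing argument (the $y_j$ are $\eta(\C)$-separated and \eqref{eq:ballmeasurelow} controls how many lie in a ball), the off-diagonal row sums satisfy $\sum_{j\ne i}|\Psi_N(y_i,y_j)|\ls N^q(N\tilde{\eta})^{-S}$, the series converging because $S>q$; choosing the constant $C$ in $N\ge C\eta(\C)^{-1}$ large enough makes this at most $\tfrac12\Psi_N(y_i,y_i)$. Hence $\mathbf{G}$ is strictly diagonally dominant with positive diagonal, so it is positive definite, which gives both the existence and the uniqueness of $P_T(\tau,\mathcal{V}(\C);F)=\sum_\ell a_\ell\Phi_N(\circ,y_\ell)$, with $\mathbf{a}=\mathbf{G}^{-1}\mathbf{b}$ and $b_j=\sigma_N(f_0F)(y_j)$. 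A Neumann-series/diagonal-dominance estimate then yields $\|\mathbf{a}\|_\infty\ls N^{-q}\|\mathbf{b}\|_\infty\ls N^{-q}\|f_0F\|_\infty$ (the bound $\|\mathbf{b}\|_\infty\ls\|f_0F\|_\infty$ again using $\int_\XX|\Phi_N(y_j,z)|\,d\mu^*(z)\ls 1$) together with the sharper estimate $|\Psi_N(y_\ell,y_\ell)a_\ell-b_\ell|\ls(N\tilde{\eta})^{-S}\|f_0F\|_\infty$.

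The pointwise estimates are then bookkeeping. If $\delta(x)>\tilde{\eta}/3$, every $\rho(x,y_\ell)>\tilde{\eta}/3$, so $|P_T(x)|\le\|\mathbf{a}\|_\infty\sum_\ell|\Phi_N(x,y_\ell)|\ls N^{-q}\|f_0F\|_\infty\cdot N^q(N\tilde{\eta})^{-S}$ by the same packing bound applied to the localization of $\Phi_N$, which is \eqref{eq:theo_away} (the symbol $P_E$ there is a typo for $P_T$). If $\delta(x)=\rho(x,y_\ell)\le\tilde{\eta}/3$, this index $\ell$ is the unique nearby center (all others being at distance $\ge2\tilde{\eta}/3$ from $x$), so $P_T(x)=a_\ell\Phi_N(x,y_\ell)+R(x)$ with $|R(x)|\ls\|f_0F\|_\infty(N\tilde{\eta})^{q-S}$. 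For the main term, substitute $a_\ell=b_\ell/\Psi_N(y_\ell,y_\ell)+O\big((N\tilde{\eta})^{-S}N^{-q}\|f_0F\|_\infty\big)$, $\Phi_N(x,y_\ell)=\Phi_N(x,x)+O\big(N^{q+1}\delta(x)\big)$ (from \eqref{eq:kernellip}), and $b_\ell=\sigma_N(f_0F)(y_\ell)=f_0(x)F(x)+O\big(\|f_0F-\sigma_N(f_0F)\|_\infty+\|f_0F\|_{\mbox{Lip}}\delta(x)\big)$ (splitting off the approximation error of $\sigma_N$ and then using the Lipschitz continuity of $f_0F$), multiply out, and collect terms using $\Phi_N(x,x)\sim\Psi_N(y_\ell,y_\ell)\sim N^q$. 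This produces precisely the three groups of terms on the right of \eqref{eq:theoretical_err_est} (the cross term $f_0(x)F(x)\,O(N^{q+1}\delta(x))/\Psi_N(y_\ell,y_\ell)$ being the source of the $N\delta(x)$ contribution, and $\eta$ in \eqref{eq:theoretical_err_est} to be read as $\tilde{\eta}$, consistently with \eqref{eq:genapproxbdnear}).

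Besides the admissibility of $\Psi_N$, the one genuinely new feature compared with Theorem~\ref{theo:empirical} is conceptual rather than technical: here the right-hand side of the normal equations is $\sigma_N(f_0F)(y_j)$, not $f_0(y_j)F(y_j)$, so there is no exact-reproduction identity analogous to \eqref{eq:riskmin}, and this is exactly why the error estimate \eqref{eq:theoretical_err_est} must carry the term $\|f_0F-\sigma_N(f_0F)\|_\infty$ measuring how far the averaging operator $\sigma_N$ is from the identity on $f_0F$.
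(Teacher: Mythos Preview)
Your proposal is correct and follows essentially the same route as the paper: the paper packages the admissibility of $\Psi_N$ into two lemmas (localization via the two-region split of $\XX$, on-diagonal lower bound via $|\Phi_N(y,z)|\gs N^q$ on $\BB(y,c/N)$ and \eqref{eq:ballmeasurelow}, Lipschitz bound by differentiating under the integral), then applies the interpolation Theorem~\ref{theo:matrixinv} to the normal system with right-hand side $\sigma_N(f_0F)(y_j)$ and bounds the far and near cases exactly as you describe. The only cosmetic difference is that the paper moves $\Phi_N(x,y_\ell)$ to $\Phi_N(y_\ell,y_\ell)$ rather than to $\Phi_N(x,x)$ in the near case (these differ by an $O(N^{q+1}\delta(x))$ term absorbed on the right), and your observations that $P_E$ in \eqref{eq:theo_away} is a typo for $P_T$ and that $\eta$ in \eqref{eq:theoretical_err_est} should be $\tilde{\eta}$ are accurate.
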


\begin{rem}\label{rem:sigmapprox}
	{\rm
		In the case of the kernels $\Phi_N^M$, the quantity $\|f_0F-\sigma_N(f_0F)\|_\infty$ is estimated by Theorem~\ref{theo:main}. 
		Similar estimates exist also in the other examples described in \cite{mhaskar2020kernel} with other conditions on $f_0$ and $F$.
		\qed}
\end{rem}

\begin{rem}\label{rem:approxrem}
	{\rm
		We note that \eqref{eq:genapproxaway} and \eqref{eq:theo_away} show that one does not expect a good approximation to $F$ as $N\to\infty$ in the region away from the points $\C$.
		On the other hand, \eqref{eq:genapproxbdnear} and \eqref{eq:theoretical_err_est} show that the approximation actually deteriorates if $N\to\infty$. 
		The balance of the various terms on the right hand side of these estimates gives us some insight into the choice of $N$  as well as how the approximation error deteriorates as $x$ is further and further away from $\C$.
		This phenomenon is caused by the localization of the kernels, resulting in overfitting the data if $N$ is very large. 
For example, in our experiments in Section~\ref{bhag:svd_performance_summary}, we will need to use $N=1$; the higher values of $N$ led to overfitting.
\qed}
\end{rem}

\bhag{Experimental Results}\label{bhag:experiments}

In this section, we describe the use of the localized kernels discussed in Sections~\ref{bhag:techintro} and
\ref{bhag:main} in the context of gesture recognition with micro-Doppler radar readings (Section~\ref{bhag:dopplersect}), and to demonstrate the fact that the theory is applicable agnostically to the domain knowledge, also in the context of a small video data set (Section~\ref{bhag:naokisect}).
As remarked before, we use the kernel $\widetilde{\Phi}_{N,q}(\rho(x,y))$ with different distances $\rho(x,y)$, corresponding to different types of manifolds, which depend on the feature representation.

\subsection{Gesture Recognition}\label{bhag:dopplersect}

In the case of micro-Doppler gesture recognition experiments, we use two kinds of features   of the measured spectrogram: features based on the singular value decomposition of the spectrogram viewed as a matrix, and features based on the PCA components of a vectorized version of the spectrogram.
For comparison, we include the results based on two CNNs, which worked with the same  preprocessed data as the other methods.
We do not consider hand-crafted features here, because both the SVD and PCA features yield substantially better results.

In view of the results described in more detail in \cite{Konstantin2018}, the first set of features we considered are the first $r$ columns of $\mathbf{U}$ to be a feature vector $\mathbf{U}_r$ for the signal $\mathbf{D}$ in \eqref{eq:stft_data} (cf. \eqref{eq:data_svd}). 
We note that this feature vector is a point on the Grassmann manifold $\Gc(r,N)$, with geodesic approximations that can be calculated efficiently.
As a variant of this set of features, we relax the assumption that subspaces of gesture spectrograms are suitable for discrimination, and consider incorporating the singular values by defining distances using both $\U$ and $\boldsymbol{\Sigma}$, which we refer to as the SVD manifold. 

Lastly, we relax any assumptions about the discriminative capability of the SVD components between individual samples, and instead use Principle Component Analysis (PCA) to calculate a common low-dimensional subspace to project the data on. 
Our methods are guided by the paper \cite{Zhengxin2020} by Zeng, Amin, and Shan.
Precise definitions of the metrics and kernels used in these approaches are listed in Table \ref{table:kerntable}.

We describe the data set in Section~\ref{bhag:gesturedata}.
The data involves 4 gestures made by 6 persons.
We focus first on recognition of gestures independently of the person making the gesture.
The pre-processing step is described in Section~\ref{bhag:preprocess}.
Section~\ref{bhag:kernelsused} describes the various kernels which will be used in our experiments.
The methodology of the experiments as well as the results are described in Section~\ref{bhag:svd_performance_summary} for the case of SVD features, Section~\ref{bhag:pca_performance_summary} for the case of PCA features, and \ref{bhag:cnn_accuracy} for the case of CNNs.
The effect of choosing the dimension of the manifold is discussed in Section~\ref{bhag:dimension}.
The effect of choosing various percentage of the training data is described in Section~\ref{bhag:trainsize}.
In Section~\ref{bhag:crosssubject}, we examine the results of training on all the gestures made by 5 of the subjects, and predicting the results for the corresponding gesture by the 6th subject.

\subsubsection{Data Set}\label{bhag:gesturedata}
We use the publicly available micro-Doppler data set \emph{Dop-NET} distributed by University College London Radar group \cite{Ritchie2020}. 
The data set consists of complex spectrograms formed using motion compensated, measured micro-Doppler signatures of five people performing four different hand gestures: snap, wave, pinch, click.
The data is obtained using a linear FMCW at $24$ GHz center and $750$ MHz bandwidth. 
While a training and test set are provided, no ground truth values are provided for test set.
Thus, we split the provided training set into a training and test set using a $80/20\%$ split.
This train/test split is used for all our experiments, except those in Section \ref{bhag:trainsize} where we investigate the effect of the size of the training set.
In each of our experiments we report the mean and variance of $5$ trials of independent sampling of the train/test set.

In Figure \ref{fig:representative_images}, an example spectrogram is shown for each class.
Since the time duration of each gesture varies, so does the number of columns in each spectrogram, which can vary by up to an order of magnitude in terms of number of samples.
Furthermore, we note that there are visible similarities between the classes; for example, the pinch, swipe and click gesture are both visually similar in terms of frequency and time support, while the wave gesture lasts for longer time duration and demonstrates oscillatory behavior.     
Clearly, these similarities make this classification problem particularly difficult, and motivate the use of features that are robust to the within class variation of time and Doppler frequencies.

\begin{figure}[ht]
	\centering
	\begin{subfigure}{.45\textwidth}
		\centering
		\includegraphics[width=.8\linewidth]{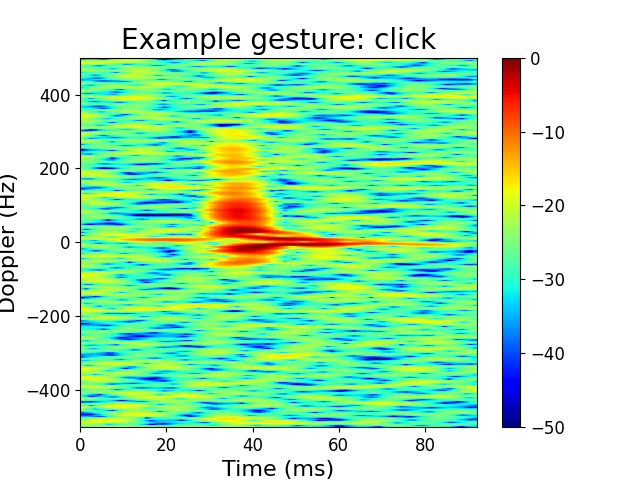}
		\caption{}
		\label{fig:click}
	\end{subfigure}
	\centering
	\begin{subfigure}{.45\textwidth}
		\centering
		\includegraphics[width=.8\linewidth]{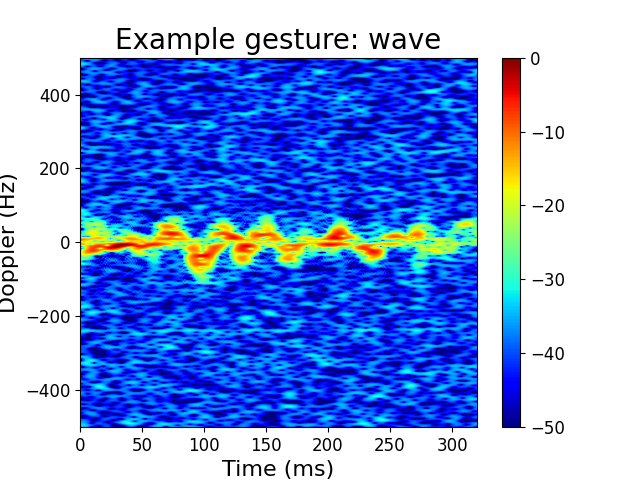}
		\caption{}
		\label{fig:wave}
	\end{subfigure}\centering
	
	\begin{subfigure}{.45\textwidth}
		\centering
		\includegraphics[width=.8\linewidth]{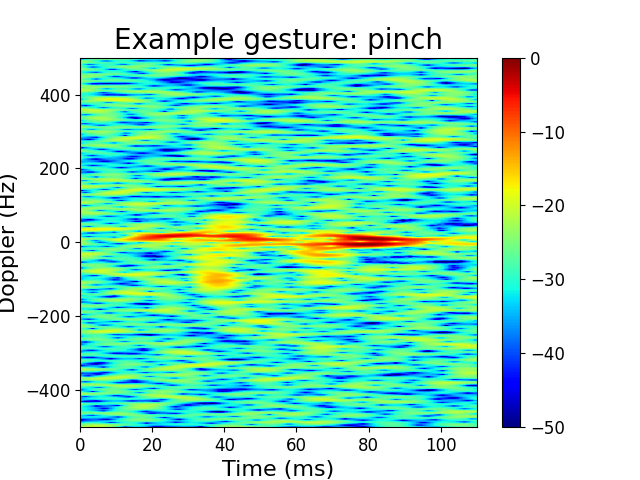}
		\caption{}
		\label{fig:pinch}
	\end{subfigure}
	\begin{subfigure}{.45\textwidth}
		\centering
		\includegraphics[width=.8\linewidth]{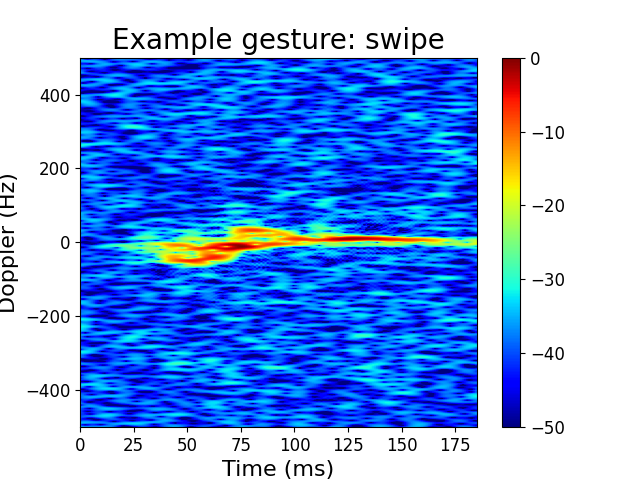}
		\caption{}
		\label{fig:swipe}
	\end{subfigure}
    \caption{A data sample selected from each of the four gestures, provided in the Dop-Net data set. Note that while the figures are scaled to be the same size, the time axis varies between different gestures. \subref{fig:click}) Click. \subref{fig:wave}) Wave.  \subref{fig:pinch}) Pinch. \subref{fig:swipe}) Swipe.}
	\label{fig:representative_images}
\end{figure}

\subsubsection{Data Preprocessing}\label{bhag:preprocess}

Prior to classification, we perform data preprocessing to reduce noise and clutter; then normalize to account for variation in the dynamic range.
To suppress the effects of noise and clutter we investigated the use of various binary thresholding algorithms to segment the signal from background, including the envelope detection approach \cite{Zhengxin2020}.
It was determined that Yen's threshold method worked best, and was used for all of our experiments \cite{Yen1995}.
We can explicitly summarize this process as
\begin{equation}\label{eq:thresholded_data}
	\tilde{X}_k = \Tc\left\{ 20 \text{log}_{10} \left( |X_k| \right) \right\}
\end{equation}
where $X_k$ is a sample from the data set, $\Tc$ is thresholding function described in \cite{Yen1995}.

Following thresholding, we normalize the data to account for the variation in the dynamic range.
We consider two approaches to this normalization which are both used in the experiments.
The first approach is to normalize the data point $\tilde{X}_k$ such that each element is in the range $[0,1]$. 
Our second approach is to convert the threshold spectrogram to a binary representation where each element is in the set $\{0,1\}$.

The SVD features are based on $r$ left singular vectors and corresponding singular values in the decomposition  
$\tilde{X}_k=U_k\Sigma_k V_k^T$, $k=1,2, \cdots, r$, for various values of $r$.

To obtain the PCA features, the spectrograms are zero padded to the same size, then vectorized.
Zero padding was also used for the inputs to the CNN. 
The PCA is made on the entire training data, and the projection of any datum on the first $r$ PCA vectors was taken as the representation of that datum in this feature space.

\subsubsection{Kernels}\label{bhag:kernelsused}

The set of kernels used in our experiments together with the values of the parameters are summarized in Table~\ref{table:kerntable}. The notation $U$, $\Sigma$  refers to  the left singular vectors and singular values of the data corresponding to the $r$ largest singular values obtained from the spectrograms as described in Section~\ref{bhag:preprocess}, and $\|\cdot\|_F$ denotes the Frobenius norm.  
The three kernels considered are the \textit{Grassmann kernel}, which uses the projection distance, an approximation of the geodesic on the Grassmann manifold \cite{Ye2016}. 
The \textit{Laplace kernel} and \textit{Gaussian kernel} use a distance function that incorporates both the singular vectors and singular values of the spectrograms.
We note that the Gaussian kernel corresponds to the case where $N=1$ and $q=1$ in \eqref{fastkerndef}.
The general kernel as defined in \eqref{fastkerndef} is referred to as the \textit{localized kernel}.

\begin{table}[ht]
	\begin{center}
		
		\begin{tabular}{|c|c|c|}
			\hline
			Kernel Type & Expression & Parameters \\ 
			\hline
			Grassmann kernel & $\exp(-\gamma (r - \|U_1^TU_2\|_F^2))$ & $\gamma = 0.2$ \\
			\hline
            Laplace kernel & $\exp(-\alpha \|U_1 - U_2\|_F - \beta \|\Sigma_1 - \Sigma_2\|_F)$
                           & $\alpha = 0.2$, $\beta = 0.0042$ \\
			\hline
			Gaussian kernel & $\exp(-\alpha\|U_1 - U_2\|^2_F - \beta \|\Sigma_1 - \Sigma_2\|^2_F)$
                            & $\alpha = 0.2$, $\beta = 0.12$ \\
            \hline
            Localized kernel (\eqref{fastkerndef}) & $\widetilde{\Phi}_{N, q}(\gamma x)$ & $\gamma = 0.8$, $q = 2$ ($n = 16$) or $q = 18$ ($n = 64$) \\
			\hline
		\end{tabular}
	\end{center}
	\caption{The definition of various kernels used in our experiments.}
	\label{table:kerntable}
\end{table}

\subsubsection{Performance Accuracy for SVD Features}\label{bhag:svd_performance_summary}

We evaluate the performance of the proposed algorithms in terms of classification accuracy and measured computational time, as these are major considerations in transitioning technology.
The results are organized by the type of preprocessing and feature type, where
preprocessing refers to whether the data is normalized or binary, and feature type refers to methods that use components of the SVD decomposition as features or PCA projections.
In this section, we focus on the SVD features. We use values of $r$ given in Table \ref{fig:hidden_dim_table}, which were found to perform best for each SVD-based method in testing.

\begin{table}[ht]
    \centering
    \begin{tabular}{|l|r|}
        \hline
        Method & $r$ \\
        \hline \hline
        Gaussian SVD SVM (binary) & 4 \\
        Grassmann SVD SVM (binary) & 5 \\
        Laplace SVD SVM (binary) & 11 \\
        Gaussian SVD SVM (normalized) & 3 \\
        Grassmann SVD SVM (normalized) & 7 \\
        Laplace SVD SVM (normalized) & 19 \\
        \hline
    \end{tabular}
    \caption{The values of $r$ (number of selected singular vectors and values) used for each of the SVD SVM methods.}
    \label{fig:hidden_dim_table}
\end{table}

To perform classification using the kernels we trained a support vector machine classifier, which was implemented in Python using SciKit-Learn \cite{pedregosa2011scikit}.
In the discussion of these methods and the corresponding results we use the notation ** SVD SVM to denote the SVM based on SVD features and using the kernel **. 
For example, Grassman SVD SVM refers to the SVM based on SVD features and the Grassman kernel.

\begin{table}[ht]
	\centering
	\begin{subfigure}{\textwidth}
		\centering
		\begin{tabular}{|l|rrr|}
			\hline
            Approach (Binary) & Average Accuracy (\%) & Test Time (s) & Train Time (s) \\ \hline \hline
            Gaussian SVD SVM & 89.94$\pm$0.0 & 17.8$\pm$0.0 & 48.15$\pm$0.27 \\ 
            Grassmann SVD SVM & 83.16$\pm$0.02 & 19.58$\pm$0.14 & 54.15$\pm$0.92 \\ 
            Laplace SVD SVM & 87.02$\pm$0.01 & 21.47$\pm$0.01 & 61.8$\pm$0.04 \\ \hline
		\end{tabular} \vspace{.1in} 
		\caption{}
		\label{fig:binary_SVD_table}
	\end{subfigure}
	\centering
	\begin{subfigure}{\textwidth}
		\centering
		\begin{tabular}{|l|rrr|}
			\hline
            Approach (Normalized) & Average Accuracy (\%) & Test Time (s) & Train Time (s) \\ \hline \hline
            Gaussian SVD SVM & 88.58$\pm$0.0 & 17.9$\pm$0.0 & 48.1$\pm$0.01 \\ 
            Grassmann SVD SVM & 86.65$\pm$0.01 & 20.92$\pm$0.01 & 58.85$\pm$0.67 \\ 
            Laplace SVD SVM & 85.13$\pm$0.02 & 25.12$\pm$0.02 & 76.41$\pm$1.29 \\ \hline
		\end{tabular} \vspace{.1in}
		\caption{}
		\label{fig:normalized_SVD_table}
	\end{subfigure}\centering
    \caption{ The performance accuracy, training time, and testing time with variance (written
    as accuracy$\pm$variance) for the kernel methods using SVD features  are presented for the different preprocessing approaches. \subref{fig:binary_SVD_table}) Results for binary preprocessing.
    \subref{fig:normalized_SVD_table}) Results for normalized preprocessing. }
	\label{fig:overall_SVD_result_table}
\end{table}

The results of these experiments are displayed in Figure \ref{fig:overall_SVD_result_table}. 
The kernel methods based on the binary features performed in the range of $83.16\%$ to $89.94\%$, with the Grassmann SVD SVM performing worst, and the Gaussian SVD SVM the best.

When the normalized samples are used, the Gaussian and Lacplace SVD SVMs see a performance decrease over binary samples by $1.36\%$ and $0.89\%$, respectively. 
The opposite is true for the Grassmann SVD SVM which increases by $3.49\%$.   
We believe the reduction in performance when using SVD derived features may be a result of an undesired invariance to magnitude and time variation, a consequence of using the singular vectors as a feature, and suggests that the singular values provide more useful information in the case of binary data. 
It is possible that the left singular vectors fail to represent certain non-stationary statistical behavior, as the time ordering is lost when calculating the singular vectors.
We leave a detailed analysis of these feature types for future work.

 \subsubsection{Performance Accuracy for PCA Features}\label{bhag:pca_performance_summary}

In this section, we focus on the results based on the PCA features described in Section~\ref{bhag:preprocess}.

We used a $5$-nearest neighbor algorithm (KNN) as well as SVM trained with the localized kernel $\tilde{\Phi}_{N,q}$ defined in \eqref{fastkerndef}, which we denote as PCA LocSVM*, where * denotes the degree of the polynomial in \eqref{fastkerndef}; i.e., $2\lfloor N^2/2\rfloor$. 
We note that   an expression of the form $\sum_j a_j\tilde{\Phi}_{N,q}(|\x-\y_j|_Q)$ resembles an average of $a_j$ at $\x$, weighted depending upon the distance of $\x$ from the point $\y_j$. 
Here $Q$ is the number of singular values used in our computation; i.e., $Q = 30$. The optimal value of $q$ is shown empirically to be 2 for degree $n = 16$ and $18$ for degree $n = 64$. This suggests that in the 30-dimensional feature space,  the data ``lives'' on an $18$-dimensional sub-manifold.
In view of the localization property \eqref{tildephiloc}, many terms in this expression will be close to $0$, and there are only a few terms involved in this average for any $\x$ depending upon its distance from various points. 
So in some sense, this is also a nearest neighbor estimate. 
However, rather than prescribing how many nearest neighbors to use, we allow the parameter $N$ to control this number effectively. 
Thus,  how many neighbors are counted effectively depend upon how many points $\y_j$ are there in a neighborhood of $\x$, the radius of this neighborhood controlled by $N$.
We note that the weights involved in this averaging are not all positive. 
This may look unusual, but is an essential requirement to get to good theoretical results on the accuracy of approximation.

The binary results are shown in Table \ref{fig:binary_PCA_table}, and the normalized results in Table~\ref{fig:normalized_PCA_table}.

\begin{table}[ht]
	\centering
	\begin{subfigure}{\textwidth}
		\centering
		\begin{tabular}{|l|rrr|}
			\hline
			Approach (Binary) & Average Accuracy (\%) & Test Time (s) &  Train Time (s) \\ \hline \hline
            PCA KNN & 95.81$\pm$0.0 & 0.42$\pm$0.0 & 22.49$\pm$0.23 \\ 
            PCA LocSVM16 & 94.83$\pm$0.0 & 0.72$\pm$0.0 & 24.89$\pm$0.05 \\ 
            PCA LocSVM64 & 95.81$\pm$0.0 & 5.69$\pm$0.0 & 62.27$\pm$0.14 \\
            CNN1 & 90.06$\pm$0.01 & 12.69$\pm$0.01 & 14770.88$\pm$749.08 \\ 
            CNN2 & 88.38$\pm$0.01 & 4.79$\pm$0.01 & 2162.59$\pm$952.77 \\ \hline
		\end{tabular} \vspace{.1in} 
		\caption{}
		\label{fig:binary_PCA_table}
	\end{subfigure}
	\centering
	\begin{subfigure}{\textwidth}
		\centering
		\begin{tabular}{|l|rrr|}
			\hline
			Approach (Normalized) & Average Accuracy (\%) & Test Time (s) &  Train Time (s) \\ \hline \hline
            PCA KNN & 95.36$\pm$0.0 & 0.53$\pm$0.02 & 23.51$\pm$0.57 \\ 
            PCA LocSVM16 & 93.72$\pm$0.01 & 0.73$\pm$0.0 & 25.61$\pm$0.1 \\ 
            PCA LocSVM64 & 96.18$\pm$0.0 & 5.97$\pm$0.15 & 64.53$\pm$1.5 \\
            CNN1 & 92.85$\pm$0.01 & 12.7$\pm$0.01 & 14746.82$\pm$342.77 \\ 
            CNN2 & 91.38$\pm$0.01 & 4.75$\pm$0.0 & 2133.72$\pm$307.46 \\ \hline
		\end{tabular} \vspace{.1in}
		\caption{}
		\label{fig:normalized_PCA_table}
	\end{subfigure}\centering
    \caption{ The performance accuracy, training time, and testing time with variance (written
    as accuracy$\pm$variance) for the kernel methods using the different PCA feature based methods considered and different preprocessing
    schemes. Results are  presented also for the two CNNs (Section~\ref{bhag:cnn_accuracy}). \subref{fig:binary_PCA_table}) Results for binary preprocessing.
    \subref{fig:normalized_PCA_table}) Results for normalized preprocessing. }
	\label{fig:overall_PCA_result_table}
\end{table}

In the binary case, the PCA KNN performed identically with PCA LocSVM64 obtaining an accuracy score of $95.81\%$ with very small variance. 
The smaller degree polynomial kernel PCA LocSVM16 performed slightly worse by $0.98\%$.
There is $2.48\%$ variation when using the normalized data, shown in Table \ref{fig:normalized_PCA_table}. The best performing method is PCA LocSVM64 achieving $96.81\%$ accuracy.
The PCA KNN achieved $95.36 \%$ which is $0.45\%$ below the binary normalized.
Based on these results, we conclude the overall performance between binary and normalized preprocessing techniques are nearly equivalent, with the binary preprocessing reducing variance between methods.
The fact that the high order localized kernel based SVM method was able to accurately classify the PCA transformed data to a level comparable to a $5$ nearest-neighbor classifier suggests that in the reduced space the data is dense, with the classes well separated. This property allows the kernel to be highly localized, but generalize well.

In terms of computational time, PCA KNN and LocSVM16 performed similarly in both train and testing, performing inference on the entire test set in less than a second, where as the PCA LocSVM64 was an order of magnitude slower taking $5.69$ seconds on the test set, and three times slower taking $62.27$ seconds to train.
We note however that our localized kernel is a very newly introduced kernel, and our computation is a brute force evaluation.
It is a subject of future research to develop fast algorithms involving Hermite polynomial expansions based at the so-called scattered data\footnote{This term refers to data points whose location is not prescribed in a specific way, for example, zeros of Hermite polynomials.}, and in particular, the fast evaluation of the kernels taking into account their localization.

\subsubsection{Performance Accuracy for CNNs}\label{bhag:cnn_accuracy}

Another popular approach to micro-Doppler gesture recognition are CNNs.
We compare the proposed methods with two different mid-size CNN architectures used for gesture recognition \cite{Kim2016,Kulhandjian2019}, which we refer to as CNN1 and CNN2, respectively.
The input to both of these networks are the preprocessed spectrograms (\ref{eq:thresholded_data}) as  described in section \ref{bhag:preprocess}, which were also used to extract the SVD and PCA features.
This ensures that every approach uses the same preprocessed data, which is further zero padded in the case of the CNNs and PCA features.
Both networks consist of $4$ convolutional layers with ReLU activation followed by $2:1$ max pooling, and the prediction is made using a single dense layer.
For CNN1, the first layer consisted of $20$ channels of $8 \times 8$ filters, the second layer is $10$ channels of $16\times 16$ filters, and the final layer is $5$ channels of $32\times 32$ filters. 
The three layers of CNN2 each consist of $5$ channels with $5\times 5$, $4\times 4$, $2\times2$ sized filters. 
Both networks were implemented using PyTorch \cite{NEURIPS2019_9015} and trained using cross-entropy loss function with the adaptive moment estimation (Adam) optimizer for $100$ epochs, at which point the models had converged.
A mini-batch size of $16$ and learning rate $0.001$ was used. 
We also experimented using networks with increased feature dimension, but this led to over-fitting due to the small size of the data set.
Therefore, pursuing deeper networks was unlikely to yield improved results.
Though it is worth noting that due to the small training set, CNN1 and CNN2 are still significantly over parameterized. 

The results are given together with those for the  PCA features in Table~\ref{fig:overall_PCA_result_table}.
With both the binary and normalized spectrograms, the PCA KNN as well as the methods PCA LocSVM16 and PCA LocSVM64 based on our localized kernels clearly outperform both the CNNs both in terms of accuracy and in terms of training time.
The testing time for CNNs is comparable with that of PCA LocSVM64.

In comparison with methods based on SVD features,  we  observe that the Gaussian SVM surpasses CNN2 by $1.56\%$ in the case of binary preprocessing.
When normalized features are used, CNN1 and CNN2  see accuracy improvements of $1.79\%$ and $3.00\%$, respectively.
The improvement in CNN performance is clearly a result of the increased information representation in the dynamic range of the spectrograms, where as only the support is available in the binary case.

In terms of time complexity in comparison with SVD based features,  CNN2 performs inference on the entire data set in $4.7-4.8$ seconds, but at the cost of $35$ minutes to train.
CNN1 is a larger network and saw inference time increase to $12.7$ seconds and more than 4 hours to train. 
In contrast, the kernel methods inference time on the test ranged from $12.69$ to $25.12$ seconds and training times in the range $48.15$ to $76.41$ seconds.

While the CNNs maintain a slight classification accuracy and inference time increase over the SVD kernel methods,  it takes significantly larger amount of time to train. While the CNNs appear to be faster than the kernel methods, the variation in inference time is most likely a result of the fact that Pytorch is a C++ library wrapped in Python, whereas Sci-Kit Learn is pure python, which is much slower.  

Overall, we found that using PCA transformed data was a better feature selection than those based on the singular vectors and values of the individual data points.
In fact, it outperformed both CNNs best performance by an average of $3\%$ accuracy.
Based on performance, we conclude that the LocSVM64 kernel method performs very closely to PCA KNN and best overall.

\subsubsection{Sensitivity to Feature Dimension}\label{bhag:dimension}

In this section, we investigate how well the proposed kernels perform when the total number of left singular vectors $r$ changes.
We carry out this experiment by using only the top $r$ singular values and corresponding singular vectors of the data. 
This can be viewed as function of information captured by the features used to construct the kernels defined in Table \ref{table:kerntable}. 
Ideally, $r$ should be chosen large enough to retain a sufficient amount of information required for optimal classifier performance.
Since the received signal spans a low-dimensional subspace, mentioned in Section \ref{bhag:doppler}, we expect that the different methods will plateau after a certain dimension due to the maximal amount of information being captured.
   
\begin{figure}[ht]
	\centering
	\begin{subfigure}{.45\linewidth}
		\centering
		\includegraphics[width=.9\linewidth]{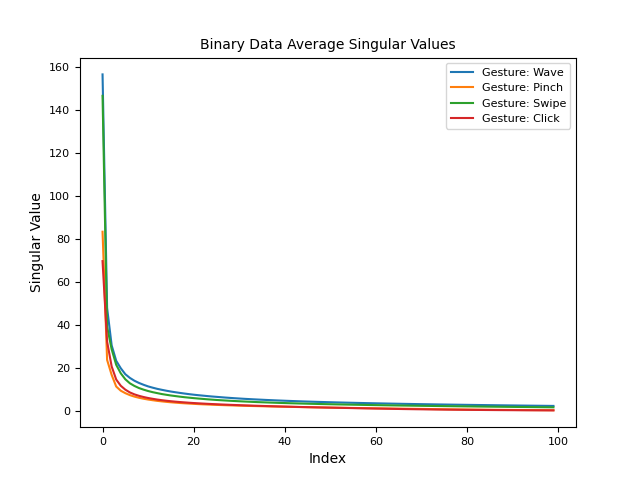}
		\caption{}
		\label{fig:bin_singular_value_dist}
	\end{subfigure}\centering
	\centering
	\begin{subfigure}{.45\linewidth}
		\centering
		\includegraphics[width=.9\linewidth]{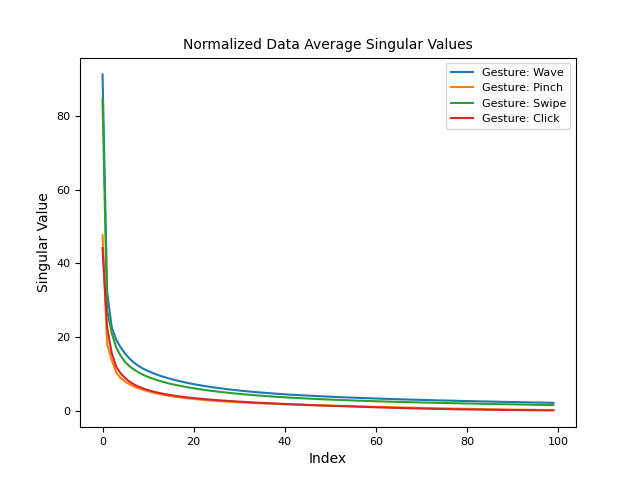}
		\caption{}
		\label{fig:norm_singular_value_dist}
	\end{subfigure}
	\caption{Plot of the first 100 average singular values for each gesture. 
\subref{fig:bin_singular_value_dist}) Binary preprocessing.
	\subref{fig:norm_singular_value_dist}) Normalized preprocessing.}
	\label{fig:singular_value_plots}
\end{figure}

In Figure~\ref{fig:singular_value_plots}, we show the singular values averaged over all realizations of each gesture.
Clearly, for each gesture and preprocessing technique there is a sharp drop before $r = 20$ and the rest of the singular values are near zero.
This suggests that the dimension of the subspace features need not be large, this is expected based on the micro-Doppler structure described in Section~\ref{bhag:doppler}.

\begin{figure}[ht]
	\centering

    \begin{subfigure}{.45\linewidth}
        \centering
	    \includegraphics[width=0.9\linewidth]{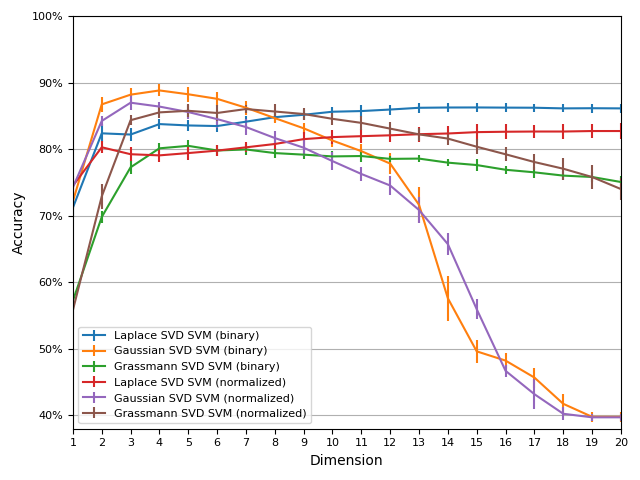}
        \caption{}
        \label{fig:error_vs_dim_svd}
    \end{subfigure}
    \begin{subfigure}{.45\linewidth}
        \centering
        \includegraphics[width=0.9\linewidth]{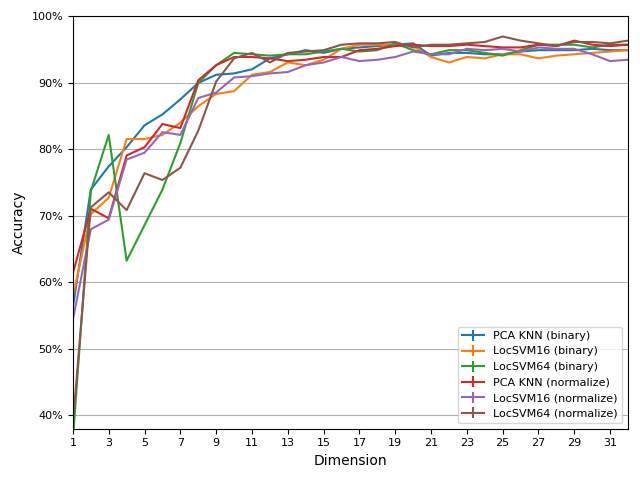}
        \caption{}
        \label{fig:error_vs_dim_pca}
    \end{subfigure}

	\caption{Classification accuracy averaged verse the dimension $r$ of the feature manifold for
    each method considered on both binary and normalized preprocessing schemes.
    \subref{fig:error_vs_dim_svd}) SVD-based methods.  \subref{fig:error_vs_dim_pca}) PCA-based
    methods. In this plot the $y$-axis is $40\%$ to $100\%$.}
	\label{fig:error_vs_dim}
\end{figure}

In Figure \ref{fig:error_vs_dim_svd} we plot the average classification error of the methods based on SVD features for $r = 1,\dots,20$ and the different preprocessing techniques. 
For $r=1$ all the SVD kernels out-perform the Grassmann kernels by approximately $18\%$.
This discrepancy in performance reduces as $r$ increases to $3-4$, at which point the performance of the Gaussian kernel begins to drop, while the Grassmann and Laplace kernels perform relatively constant with a few percent variance.   
At this time, we are not able to explain the drop in performance of the Gaussian kernels, which we find interesting. 

The proposed approaches using PCA features, are shown in Figure \ref{fig:error_vs_dim_pca}. 
Here, the number $r$ of PCA components is the dimension of the ambient space, denoted by  $Q$ in \eqref{fastkerndef}.
 For the LocSVM tests, we adjust the manifold dimension $q$ to ensure that it is no greater than the feature space dimension $r$. 
 Specifically, for values of $r$ at or above $18$, $q$ is set to $18$ for LocSVM64. For values of $r$ less than $18$, $q$ is set to $r$. 
 Since LocSVM16 was found to perform best for $q = 2$, it did not need adjusting to remain below $r$, except in the case when $r=q=1$.
The performance of all methods and preprocessing techniques perform similarly, achieving accuracy in the mid $90\%$ for sufficiently large $r$.
Choice of preprocessing seemingly has little effect on performance, observed in the prior experiments as well. 
There is a also a large drop off in performance as the dimension reduces from $r=10$.
The KNN PCA and LocSVM64 degrade similarly, with classification performance collectively falling by $30\%$.
The LocSVM16 performs worse, with accuracy dropping over $70\%$ as $r$ reduces from $10$ to $1$.
This drop is expected since for very small $r$ only a small amount of discriminitive information of the data is preserved.

\subsubsection{Robustness to Limited Training Data}\label{bhag:trainsize}

In many radio-frequency machine learning applications training data is difficult to collect and will likely be the limiting factor of the model's performance.
In the case of gesture recognition, not only is cost and effort high, but the data collection involves human subjects \cite{Gurbuz2020_book}. 
Thus, it is important to develop an approach that is robust to limited data. 
As a result of the complexity of data collection, what is considered a large data set in micro-Doppler recognition problems will pale in comparison to those used as benchmarks in the Deep Learning community, such as  ImageNet used in object recognition \cite{deng2009imagenet}.
For example, the Dop-net data set consists of $2433$ training samples and probably less testing samples\footnote{Since the test set is not publicly released, we are assuming the test set is small compared to the training set, which is typical ML practice.}, while ImageNet has $14$ million.

\begin{figure}[ht]
	\centering
    \begin{subfigure}{.45\linewidth}
	    \includegraphics[width=.9\linewidth]{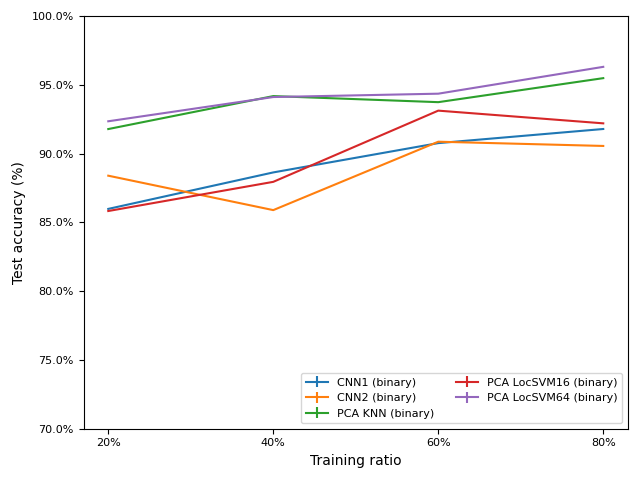}
        \caption{}
        \label{fig:test_error_vs_training_size_pca_binary}
    \end{subfigure}
    \begin{subfigure}{.45\linewidth}
	    \includegraphics[width=.9\linewidth]{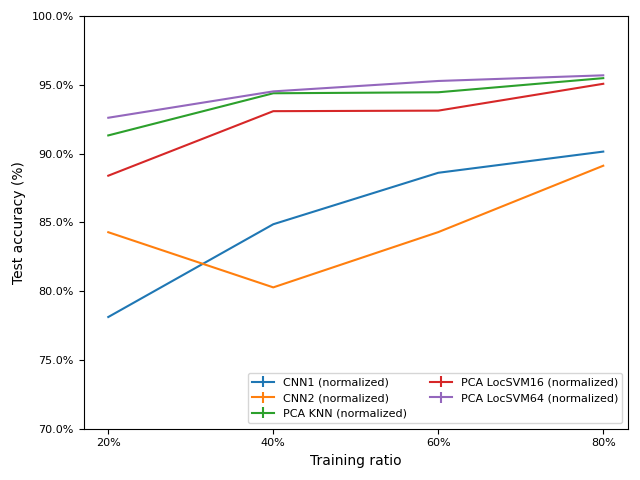}
        \caption{}
        \label{fig:test_error_vs_training_size_pca_normalized}
    \end{subfigure}
	\caption{Test error vs. training size for each PCA-based method and preprocessing approach considered. \subref{fig:test_error_vs_training_size_pca_binary}) Binary preprocessing.  \subref{fig:test_error_vs_training_size_pca_normalized}) Normalized preprocessing. In this plot the $y$-axis is $70\%$ to $100\%$.}
    \label{fig:test_error_vs_training_size_PCA}
\end{figure}

\begin{figure}[ht]
	\centering
    \begin{subfigure}{.45\linewidth}
	    \includegraphics[width=.9\linewidth]{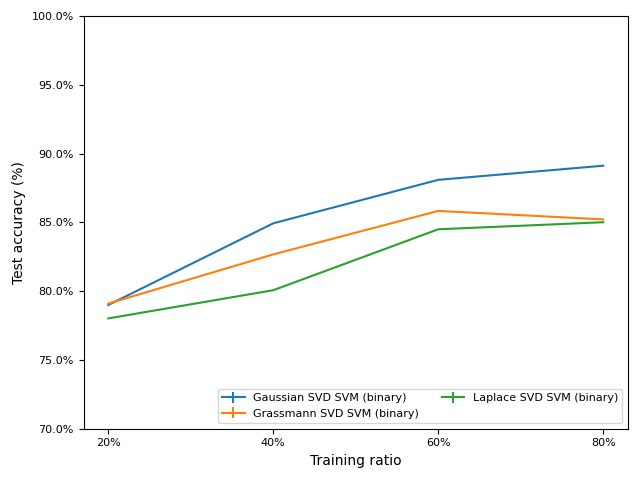}
        \caption{}
        \label{fig:test_error_vs_training_size_raw_binary}
    \end{subfigure}
    \begin{subfigure}{.45\linewidth}
	    \includegraphics[width=.9\linewidth]{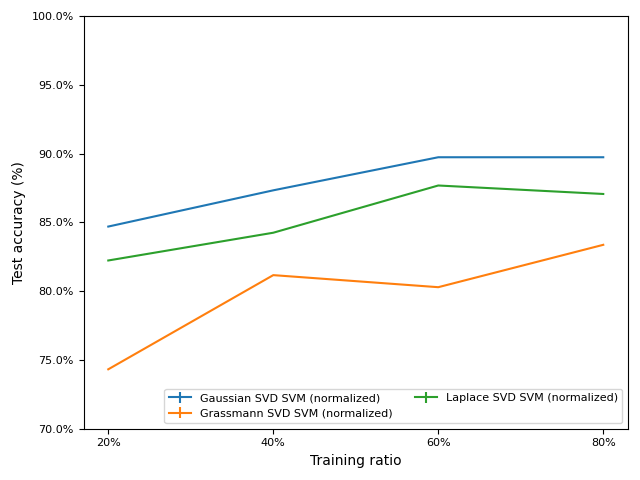}
        \caption{}
        \label{fig:test_error_vs_training_size_raw_normalized}
    \end{subfigure}
	\caption{Test error vs. training size for each  and preprocessing approach based on SVD.
     \subref{fig:test_error_vs_training_size_raw_binary}) Binary preprocessing is used.
    \subref{fig:test_error_vs_training_size_raw_normalized}) Normalized preprocessing is used.  In this plot the $y$-axis is $70\%$ to $100\%$.}
    \label{fig:test_error_vs_training_size_SVD}
\end{figure}

We evaluate each approach on $\{20,40,60,80\}\%$ of the training data for five trials and average the results.
Figures \ref{fig:test_error_vs_training_size_PCA} and \ref{fig:test_error_vs_training_size_SVD} show the classification accuracy for the methods based on SVD and PCA features for different training set sizes, respectively.
The total amount of data used for training varied between $487$ and $1947$ samples.  

Figure \ref{fig:test_error_vs_training_size_pca_binary} shows the results for the PCA based features with binary preprocessing. 
As expected, the classification accuracy increases with the size of the data set, this is also the case for normalized data, shown in Figure \ref{fig:test_error_vs_training_size_pca_normalized}.
For both the binary and normalized data preprocessing the PCA-KNN and PCA LocSVM64 demonstrate equivalent performance over the range of training data.
In the binary case, all the methods and training sizes classification accuracy are in the range $85.82\%$ to $96.30\%$, and $88.39\%$ to $95.65\%$ for normalized preprocessing. 
It is notable that in the case of normalized preprocessing, PCA LocSVM64 does marginally outperform PCA KNN by $0.14\%$-$1.28\%$, while the PCA LocSVM16 under performs in every case, and by a larger margin when binary preprocessing is used.

Ignoring the under performance of PCA LocSVM16, the performance range reduces to $91.78\%$ to $96.30\%$ for binary preprocessing and up to $91.32\%$ and $95.69\%$ for normalized preprocessing.
Considering the tight performance range and inconsistency of any of the methods to consistently outperform, we conclude that the methods are robust to limiting the amount of training data. 
While this does not necessarily guarantee robustness when the data set size gets significantly larger, in the case of PCA LocSVM64 and PCA KNN we expect this observed robustness will hold.
The reason is that when using PCA features the data manifold in this case allows a greater separation among the classses than that based on SVD features. Therefore, the PCA LocSVM64 can better approximate the data with fewer samples. 
In theory, of course, the main problem is to find the right feature space; the straightforward approximation procedure developed in \cite{mhaskar2019deep} obviates the need to do any training at all.

The SVD features result in more variation of classification accuracy and are shown in Figure \ref{fig:test_error_vs_training_size_SVD}.
With the exception of CNN2, which has an accuracy drop at $40\%$ train/test ratio, all the methods demonstrate a clear monotonic increase in accuracy with increasing training data.
The SVD features in the binary preprocessing are shown in Figure \ref{fig:test_error_vs_training_size_raw_binary}, where the range of classification accuracy is $78.02\%$ to $90.14\%$.
The normalized preprocessed SVD feature performance are shown in Figure \ref{fig:test_error_vs_training_size_raw_normalized}, with classification accuracy in the range of $78.02\%$ to $91.79\%$.
Overall, there was is a clear reduction in performance when using SVD features opposed to PCA. 
The PCA LocSVM64 outperformed the SVD features by $6.16\%$ and $3.86\%$ for the binary and normalized preprocessing methods.

In the binary case, CNN1 and CNN2 outperformed the methods using SVD features, except for CNN2 when the train/test ratio was $40\%$. 
Following the CNNs in classification performance was the Gaussian SVD SVM, Laplace SVD SVM and the Grassmann SVD SVM, listed in order of descending performance.
With normalized preprocessing the Gaussian SVM outperforms all other methods, except for CNN1 with $80\%$ train/test ratio, in which case CNN1 only outperforms by $0.41\%$.

Clearly, the proposed manifold learning approach appears robust to using just a few hundred samples.
We attribute this property to a number of factors, first of which is micro-Doppler radar data can be very sensitive to multiple scattering and environmental effects. 
The choice of preprocessing clearly plays a role in suppressing these nuisance variables, which is then improved by the fact that using only $r$ singular vectors as features further reduces uninformative information.

Furthermore, the $r$ dimensional subspace representation is able to capture the time-varying properties of the data, which correspond to important discriminative features in the original time-series and improves generalization.
This is further motivated, by noting that the relationship between the subspace and received radar signals, which span a low-dimensional subspace.
To understand this, we observe that in \eqref{eq:discrete_spec} the rank of the spectrogram is determined only by the rank of the matrix $\mathbf{S}$.
Therefore, since the micro-Doppler signature consists of a linear combination of delayed and Doppler shifted signals from the dominant scattering points, the rank of the resulting matrix $\mathbf{S}$ is small, thus, it is well captured with only a small number of singular vectors. 
Furthermore, the proposed kernel provides good localization at data points on the manifold, which we suspect helps to isolate nuisance variables in the data.

It is very surprising that the CNN continues to perform well with limited training samples. 
While this may seem impressive, it is difficult to generalize this claim to all micro-Doppler data sets as the literature shows varying performance when similar models are used on different data sets. 
We think this may be a result of the fact that the training set is already so small to begin with, so  training either the full data set or only $20\%$ does not make a significant difference in performance due to the very large over-parameterization of the neural networks.
This is in contrast to our kernel approach where the model does not over-parameterize the data.
Furthermore, this peculiar performance may also be a result of robustness added by the preprocessing used prior to input into the CNN.

\subsubsection{Cross-Subject Generalization}\label{bhag:crosssubject}

Generalization is an important aspect in any machine learning task and seeks to ensure that the algorithm will perform well on out-of-sample data. 
In micro-Doppler radar, there are a number of factors that can cause variation of the in- and out-sample distributions, like environmental variations leading to changes in the electromagnetic (EM) wave propagation medium, changes in material from which the EM waves scatter off, and non-stationary clutter and multiple scattering effects. 
For example, in synthetic aperture radar target classification, it has been observed that deep learning classifiers will fail to generalize from synthetic data to measured data.
This is attributed to the fact that neural networks tend to learn background surface statistics instead of true features of the object of interest \cite{jo2017}.
In this case, the problem is that simulating the scattering off foliage is computationally intractable.
These types of challenges have also been observed in classification of communication signals, where algorithms will fail to generalize to data collected on different days.

\begin{figure}[!ht]
	\centering
	\begin{subfigure}{.45\linewidth}
		\centering
		\includegraphics[width=.9\linewidth]{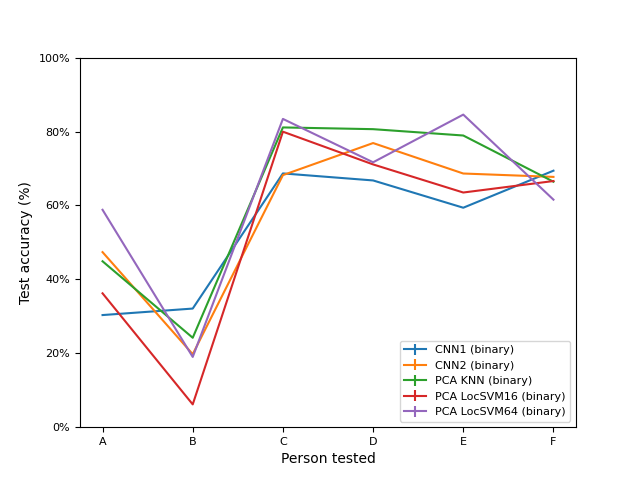}
		\caption{}
		\label{fig:diff_pers_pca_binary}
	\end{subfigure}
	\centering
	\begin{subfigure}{.45\linewidth}
		\centering
		\includegraphics[width=.9\linewidth]{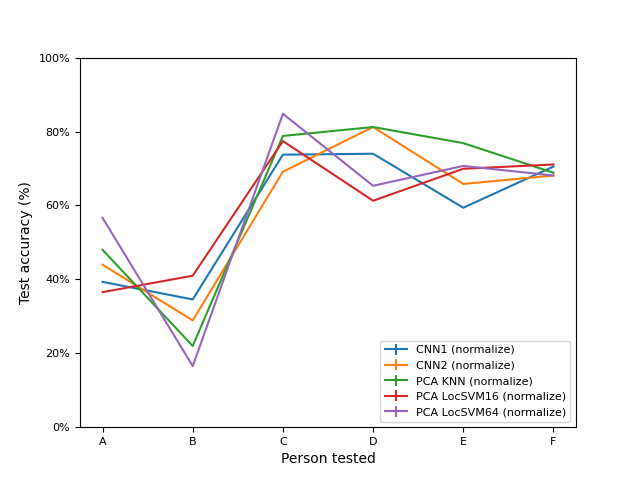}
		\caption{}
		\label{fig:diff_pers_pca_norm}
	\end{subfigure}
    \centering
	\begin{subfigure}{.45\linewidth}
		\centering
		\includegraphics[width=.9\linewidth]{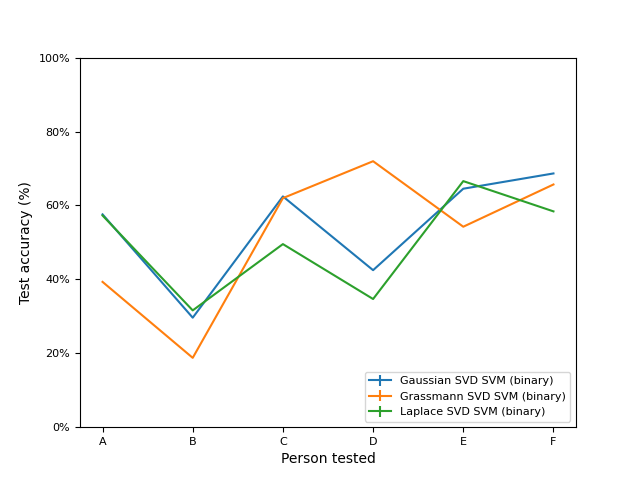}
		\caption{}
		\label{fig:diff_pers_raw_binary}
	\end{subfigure}
    \centering
	\begin{subfigure}{.45\linewidth}
		\centering
		\includegraphics[width=.9\linewidth]{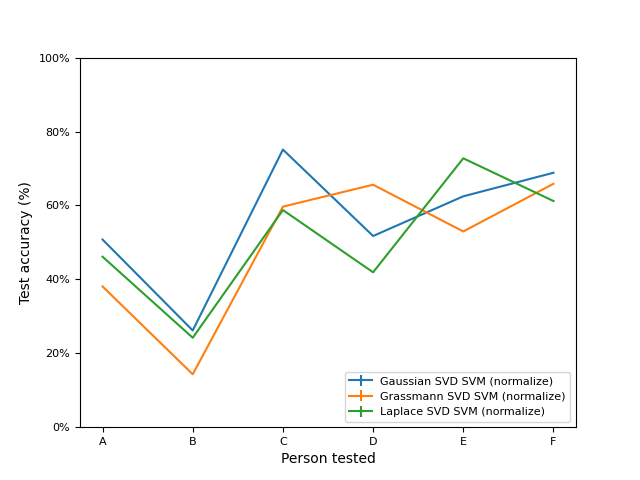}
		\caption{}
		\label{fig:diff_pers_raw_norm}
	\end{subfigure}
    \centering
    \caption{Test accuracy for each algorithm and preprocessing approach when the methods are
    trained on data from five people and tested on the sixth. \subref{fig:diff_pers_pca_binary})
    PCA based features with binary preprocessing. \subref{fig:diff_pers_pca_norm}) PCA based features with normalized preprocessing. \subref{fig:diff_pers_raw_binary}) SVD features with
    binary preprocessing. \subref{fig:diff_pers_raw_norm}) SVD features with normalized
    preprocessing.}
	\label{fig:test_error_vs_different_persons}
\end{figure}

Since the data set consists of collections from six different individuals, it is an interesting question as to whether the manifold learning approach is robust to testing on individuals it was not trained on.
To investigate this, we train on five of the six individuals and test on the remaining one.
We present these results in Figure~\ref{fig:test_error_vs_different_persons}.
From these results, it is interesting to see that there is a $40\%$ to $50\%$ variation in performance between different combinations of methods and the person held out of training. This suggests that there must be characteristics between each person not necessarily captured by the others as possible environmental effects are mitigated by the fact that the data is collected in a lab experiment.

For example, in the case of PCA features, all the methods perform most poorly on Person $B$.
In the binary case, PCA LocSVM16 performs worse with accuracy score $6.17\%$ and PCA KNN performs best, scoring $24.20\%$ accuracy.
The performance increased by $20\%$ to $40\%$ in the case of Person $A$ and increased by $40\%$ to $60\%$, in the case of either preprocessing choice.
There is no consistent pattern of out performance by any particular approach, but they do follow the same trend of when they do and do not generalize well.

In the case of SVD features, the overall performance is lower, ranging from $18.77\%$ for the Grassmann SVD SVM when tested on Person $B$, up to $81.21\%$ for CNN2 with normalized preprocessing when tested on person $D$.
Similarly for PCA features, there is no approach that consistently outperforms all other methods for each person.
An interesting behavior is how the Grassmann SVD SVM performs worse than the Gaussian SVD SVM and Laplace SVD SVM on Person $A$ and $B$, but then outperforms on person $C$, and comparably for Persons $D$, $E$ and $F$.

Based on these results it is clear that the choice of preprocessing does not have a meaningful impact in generalization across different persons, thogh there is a slight out performance of normalization preprocessing.
Clearly, for some persons the methods are able to generalize well, whereas in some cases they do not.
We suspect this may have to do with variation in how different persons make specific gestures and/or body size, which can effect the distance and orientation of the scattering centers to the radar.
This could effect both the amplitude and support of the spectrogram and no longer have the same distribution as the training data.

While these results do not provide a clear pattern, it motivates the requirement for a more in depth and controlled study as to what physical effects of the micro-Doppler data are important for achieving good generalization. 
Thus, we conclude that there is relationship between understanding the data from a physics perspective and what are appropriate approaches.
Overall this suggests that optimal performance is as much a function of the ML algorithm, as it is the data collection approach and preprocessing.

\subsection{Performance on Video-Data}\label{bhag:naokisect}

The purpose of this section is demonstrate that our kernel can be used agnostically to the domain knowledge about the data. 
Toward this goal, we examine a small and simple video data set (without the audio component) where the objective is to identify which digit was spoken in the video.
In Section~\ref{bhag:naokidata}, we describe the data set.
Our methodology and results are described in Section~\ref{bhag:methods}. 

\subsubsection{Data Set}\label{bhag:naokidata}
We use the data described in \cite{lieu_signal_2011}.
The data consists of video clips of the same person pronouncing digits from 1-5 in English. Each of the five digits is
recorded ten times, yielding 50 total videos. 
Each video is cropped to a 70 x 55 area around the
mouth, with the position of the nose held constant, and then converted to grayscale. 
The problem is
to read what digit between one and five the subject is pronouncing in each video clip, effectively a
signal classification problem among 5 classes.

\begin{figure}
	\centering
	\includegraphics[width=.35\textwidth]{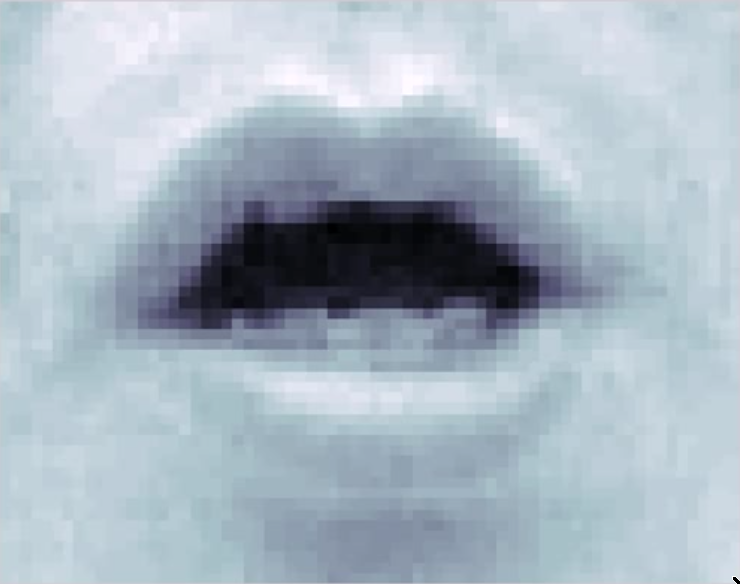}
	\includegraphics[width=.35\textwidth]{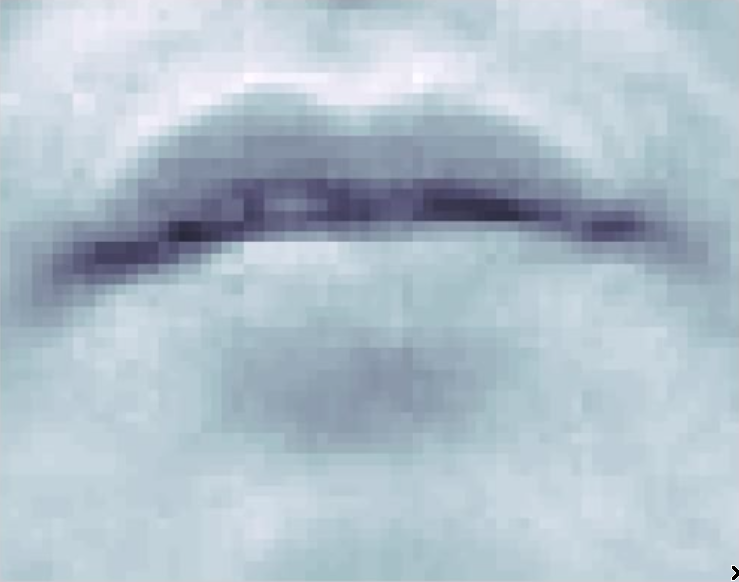}
	\caption{Two example frames from lip video dataset}
\end{figure}

\subsubsection{Methodology and Results}\label{bhag:methods}

For this data set, we test the two kernels: the Grassmann kernel as used above as well as a standard
radial-basis function (RBF) kernel, also referred to as the Gaussian kernel on Euclidean space, to establish a performance baseline using a support
vector machine to perform classification. 
Both the kernels  and the corresponding hyperparameters are defined in Table
\ref{table:kerntablenaoki}. Each frame of the video clips is flattened into a 3850 x 1 row vector,
yielding a 3850 x $\tau$ signal for each clip, where $\tau$ denotes the number of frames in the
clip. For the RBF kernel, we further flatten the signal into a (3850 x $\tau$) x 1 row vector. In
accordance to the experiments in \cite{lieu_signal_2011}, the training set consists of 50\% data and
the test data is the remaining 50\%. Specifically, we randomly choose 5 videos from each of the five
digit classes to use as training data, totalling 25 videos. We run each test 25 times with
randomly-chosen splits each time and average the results below. We compare the performance of our
methods to the PCA-EMD algorithm used by Lieu and Saito \cite{lieu_signal_2011}, where the data is transformed into a
lower-dimensional representation using principal component analysis, then the earth-mover's distance
is calculated and used for nearest-neighbor classification.

\begin{table}[ht]
	\begin{center}
		\begin{tabular}{|c|c|c|}
			\hline
			Kernel Type         & Expression                                & Parameters\\
			\hline
			Grassmann kernel    & $\exp(-\gamma (r - \|U_1^TU_2\|_F^2))$    & $\gamma = 0.2$ \\ \hline
			Euclidean kernel    & $\exp(-\gamma \|x - x'\|^2)$              & $\gamma = 2.1 \cdot 10^{-7}$ \\ \hline
		\end{tabular}
	\end{center}
	\caption{The definition of various kernels used for lip video dataset.}
	\label{table:kerntablenaoki}
\end{table}

\begin{figure}[ht]
	\centering
	\begin{tabular}{|l|rrr|}
		\hline
		Approach        & Average Accuracy (\%) & Train Time (s)    & Test Time (s) \\ \hline \hline
		Grassmann SVM   & 95.84$\pm$3.29        & 3.06$\pm$0.39     & 3.15$\pm$0.48 \\
		Euclidean SVM   & 75.20$\pm$8.31        & 2.43$\pm$0.28     & 2.51$\pm$0.47 \\
		PCA-EMD \cite{lieu_signal_2011} & 94.70 & & \\
		PCA-HD \cite{lieu_signal_2011}  & 90.60 & & \\ \hline
	\end{tabular} \vspace{.1in} 
	\caption{Classification accuracy of three kernels on lip video dataset}
	\label{fig:lip_table}
\end{figure}

The Grassmann kernel performs the best at 95.84\% accuracy, outperforming the two methods (PCA-EMD
and PCA-HD) originally used on the dataset, as well as the basic Euclidean kernel.

As above, we perform the experiment using differently-sized subsets of training data for the
classifiers to evaluate the robustness of each approach to small training sets. Each approach is
evaluated on \{10, 20, 40, 60, 80\}\% of the training data for 25 trials each, and each train/test
split retains an equal ratio of each class. The results are shown in Figure \ref{fig:lip_sizes}.
Similar to the DopNet dataset, performance does not vary significantly within 40-80\% training size
using the Grassmann SVM method, suggesting that the Grassmann kernel has strong predictive power with
just 20 samples in the training set.

\begin{figure}
	\centering
	\includegraphics[width=.45\linewidth]{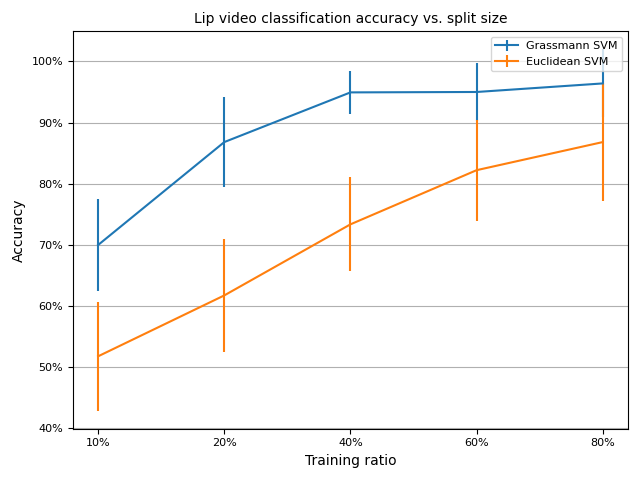}
	\caption{Classification accuracy vs. training/test data set split ratio on lip video dataset. In this plot the $y$-axis is $40\%$-$100\%$.}
	\label{fig:lip_sizes}
\end{figure}

\bhag{Proofs}\label{bhag:proofs}

The proofs of both Theorems~\ref{theo:empirical} and \ref{theo:theoretical} involve the solution of an interpolation problem. 
Accordingly, our first objective is to study this problem when $\Phi_N$ is an admissible kernel (cf. Definition~\ref{def:admissibledef}).
We assume the set up as in Section~\ref{bhag:main}.

For integer $M\ge 2$, distinct points $y_1,\cdots,y_M \in \XX$, and real numbers $f_1,\cdots,f_M$, we consider the interpolation problem
\be\label{eq:interp_problem}
\sum_{k=1}^M a_k\Phi_N(y_j,y_k)=f_j.
\ee
Our first goal is to investigate when the collocation/normal matrix  $[\Phi_N(y_j,y_k)]_{j,k=1}^M$  is invertible and to estimate the norm of this matrix.

\begin{theorem}\label{theo:matrixinv}
	Let $M\ge 2$ be an integer, $\C=\{y_1,\cdots,y_M\}\subset\XX$, $\eta=\eta(\C)$ be defined as in \eqref{eq:minsep}, $\tilde{\eta}=\min(1,\eta/3)$. 
	We assume that $\Phi_N$ is a $(q,S)$-localized kernel that   satisfies \eqref{eq:ondiagonal}, and further that \eqref{eq:ballmeasurecond} and \eqref{eq:ballmeasurelow} are satisfied.
	There exists a positive constant $C$  such that if  $N \ge C\tilde{\eta}^{-1}$, then  the system of equations \eqref{eq:interp_problem} has a solution $\{a_k^*\}$ satisfying
	\be\label{eq:interp_est}
	\max_{1\le k\le M}|a_k^*| \ls N^{-q}\max_{1\le j\le M}|f_j|,
	\ee
	and
	\be\label{eq:interp_errest}
	\max_{1\le k\le M} \left|a_k^*\Phi_N(y_k,y_k)-f_k\right|\ls (N\eta)^{q-S}\max_{1\le j\le M}|f_j|.
	\ee 
\end{theorem}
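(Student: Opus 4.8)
### Proof Strategy for Theorem~\ref{theo:matrixinv}

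\textbf{The plan} is to show that the normal matrix $\mathbf{A} = [\Phi_N(y_j,y_k)]_{j,k=1}^M$ is a small perturbation of the diagonal matrix $\mathbf{D} = \mathrm{diag}(\Phi_N(y_1,y_1),\dots,\Phi_N(y_M,y_M))$, whose entries are all $\sim N^q$ by \eqref{eq:ondiagonal}, and then invert by a Neumann series argument. The key is to bound the off-diagonal mass in a suitable operator norm. Writing $\mathbf{A} = \mathbf{D} + \mathbf{E}$ where $\mathbf{E}$ has zero diagonal and entries $\Phi_N(y_j,y_k)$ for $j\neq k$, I would estimate the $\ell^\infty\to\ell^\infty$ operator norm of $\mathbf{E}$, i.e. $\max_j \sum_{k\neq j} |\Phi_N(y_j,y_k)|$.

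\textbf{The main technical step} is the bound $\sum_{k\neq j}|\Phi_N(y_j,y_k)| \ls N^q (N\tilde\eta)^{-S}$, which is where the minimal separation $\eta(\C)$ enters decisively. First I would use the $(q,S)$-localization \eqref{eq:genlocalest} to write $|\Phi_N(y_j,y_k)| \ls N^q (N\rho(y_j,y_k))^{-S}$ for the (necessarily many) pairs with $N\rho(y_j,y_k) \ge 1$. Then I would sort the points $y_k$ ($k\neq j$) into dyadic annuli $A_i = \{k : 2^{i-1}\tilde\eta \le \rho(y_j,y_k) < 2^i\tilde\eta\}$ for $i = 1,2,\dots$. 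Because distinct points are $\eta$-separated, the balls $\BB(y_k,\tilde\eta/2)$ ($\tilde\eta \le \eta/3$) are pairwise disjoint, and for $k\in A_i$ these balls are all contained in $\BB(y_j, (2^i+1)\tilde\eta)$; combining the lower bound \eqref{eq:ballmeasurelow} on $\mu^*(\BB(y_k,\tilde\eta/2))$ with the upper bound \eqref{eq:ballmeasurecond} on $\mu^*(\BB(y_j,(2^i+1)\tilde\eta))$ gives $\#A_i \ls (2^i)^q$. Summing the localization bound over the annuli,
\be\label{eq:annulisum}
\sum_{k\neq j}|\Phi_N(y_j,y_k)| \ls N^q \sum_{i\ge 1} (2^i)^q (N 2^{i-1}\tilde\eta)^{-S} \ls \frac{N^q}{(N\tilde\eta)^S}\sum_{i\ge 1} 2^{i(q-S)},
\ee
and the geometric series converges because $S > q$. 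This is the classical "summing localized kernels against a separated set" estimate, and it is the crux of the argument.

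\textbf{Assembling the conclusion:} choosing $C$ large enough that $N \ge C\tilde\eta^{-1}$ forces $\|\mathbf{D}^{-1}\mathbf{E}\|_{\infty\to\infty} \ls (N\tilde\eta)^{-S} \le 1/2$, so $\mathbf{A} = \mathbf{D}(\mathbf{I} + \mathbf{D}^{-1}\mathbf{E})$ is invertible with $\|\mathbf{A}^{-1}\|_{\infty\to\infty} \ls N^{-q}$ (using $\|\mathbf{D}^{-1}\|_{\infty\to\infty}\sim N^{-q}$ from \eqref{eq:ondiagonal}). Setting $\mathbf{a}^* = \mathbf{A}^{-1}\mathbf{f}$ immediately yields \eqref{eq:interp_est}. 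For \eqref{eq:interp_errest}, note that $a_j^*\Phi_N(y_j,y_j) - f_j = -\sum_{k\neq j} a_k^*\Phi_N(y_j,y_k)$ by the $j$-th equation of \eqref{eq:interp_problem}; bounding this by $\big(\max_k|a_k^*|\big)\cdot \max_j\sum_{k\neq j}|\Phi_N(y_j,y_k)| \ls N^{-q}\big(\max_j|f_j|\big) \cdot N^q(N\eta)^{q-S}$ gives exactly \eqref{eq:interp_errest} (absorbing the difference between $\tilde\eta$ and $\eta$, and between $(N\tilde\eta)^{-S}$ and $(N\eta)^{q-S}$, into the constants since $q - S < 0$ and $\eta\sim\tilde\eta$ whenever $\eta \le 3$; the case $\eta > 3$ is handled by the same argument with $\tilde\eta$ replaced appropriately). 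The one point requiring a little care is the interface between the normalization conventions ($\tilde\eta = \min(1,\eta/3)$ in the statement versus the ball-disjointness radius), but this only affects constants. I do not expect to need the Lipschitz condition \eqref{eq:kernellip} for this theorem — that will come into play only in the later pointwise estimates.
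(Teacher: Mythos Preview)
Your proposal is correct and follows essentially the same route as the paper: a dyadic--annulus packing argument (using \eqref{eq:ballmeasurecond}, \eqref{eq:ballmeasurelow}, and the separation $\eta$) to bound the off-diagonal row sums by $\ls N^q(N\tilde\eta)^{q-S}$, then diagonal dominance via \eqref{eq:ondiagonal} to invert. The only cosmetic difference is that the paper packages the annulus estimate as a lemma about integrating $|\Phi_N|$ against a ``$d$-regular'' measure (the counting measure on $\C$ being $\tilde\eta$-regular with norm $\ls \tilde\eta^{-q}$), which it then reuses in the proofs of Theorems~\ref{theo:empirical} and~\ref{theo:theoretical}; your direct version is the same computation unrolled, and your Neumann-series formulation is equivalent to the diagonal-dominance linear-algebra fact the paper cites.
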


It is convenient to formulate certain technical details of the proof in terms of the notion of a regular measure.
If $\nu$ is a (positive or signed) measure on $\XX$,  we denote by $|\nu|$ its total variation measure. If $d\ge 0$, we say that $\nu$ is $d$-regular if
\be\label{eq:upballmeasure}
\tn\nu\tn_d=\sup_{\x\in\XX\atop r>0}\frac{|\nu|(\BB(x,r)}{(r+d)^q} <\infty.
\ee
For example, $\mu^*$ itself is a $0$-regular measure, with $\tn\mu^*\tn_0$ being the constant involved in \eqref{eq:ballmeasurecond}.
Another important example is given in the following lemma.

\begin{lemma}\label{lemma:reg_examp_lemma}
	Let $\C=\{y_1,\cdots,y_M\}$ be distinct points in $\XX$, $\eta=\eta(\C)$ be as in \eqref{eq:minsep}, $\tilde{\eta}=\min(1,\eta/3)$, and $\nu$ be the counting  measure; i.e., the measure that associates the mass $1$ with each $y_j$. 
	Then 
	\be\label{eq:counting_reg}
	\tn\nu\tn_{\tilde{\eta}}\ls \frac{1}{\tilde{\eta}^q}.
	\ee 
\end{lemma}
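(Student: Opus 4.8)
The plan is to estimate, for an arbitrary ball $\BB(x,r)$, how much mass the counting measure $\nu$ can place inside it, and to show this is at most a constant multiple of $((r+\tilde\eta)/\tilde\eta)^q$. The mass $|\nu|(\BB(x,r))$ is exactly the number of points $y_j$ lying in $\BB(x,r)$, so the task reduces to a packing argument: count how many $\tilde\eta$-separated points can fit into a ball of radius $r$.

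First I would fix $x\in\XX$ and $r>0$, and let $J=\{j : \rho(x,y_j)\le r\}$. Since the $y_j$ are $\eta$-separated (hence $\tilde\eta$-separated, as $\tilde\eta=\min(1,\eta/3)\le\eta$, and in fact $\le\eta/3$), the balls $\BB(y_j,\tilde\eta/2)$, $j\in J$, are pairwise disjoint: if $z$ were in two of them, then $\rho(y_j,y_k)\le\tilde\eta<\eta$, a contradiction. Moreover each such ball is contained in $\BB(x,r+\tilde\eta/2)$ by the triangle inequality. Hence
\be
\sum_{j\in J}\mu^*\left(\BB(y_j,\tilde\eta/2)\right)=\mu^*\left(\bigcup_{j\in J}\BB(y_j,\tilde\eta/2)\right)\le \mu^*\left(\BB(x,r+\tilde\eta/2)\right).
\ee
Now I would invoke the measure lower bound \eqref{eq:ballmeasurelow}: since $\tilde\eta/2\le 1/2\le 1$, we have $\mu^*(\BB(y_j,\tilde\eta/2))\gs (\tilde\eta/2)^q\sim \tilde\eta^q$, and the upper bound \eqref{eq:ballmeasurecond} gives $\mu^*(\BB(x,r+\tilde\eta/2))\ls (r+\tilde\eta/2)^q\ls (r+\tilde\eta)^q$. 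Combining, $|J|\cdot\tilde\eta^q\ls (r+\tilde\eta)^q$, so $|\nu|(\BB(x,r))=|J|\ls (r+\tilde\eta)^q/\tilde\eta^q$. Dividing by $(r+\tilde\eta)^q$ and taking the supremum over $x$ and $r$ yields $\tn\nu\tn_{\tilde\eta}\ls \tilde\eta^{-q}$, which is exactly \eqref{eq:counting_reg}.

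The one point needing a little care is the application of \eqref{eq:ballmeasurelow}, which is only assumed for radii $0<r\le 1$; this is why one works with radius $\tilde\eta/2\le 1/2$ rather than $\tilde\eta$ itself, and this is also the reason the definition of $\tilde\eta$ caps $\eta/3$ at $1$. A second minor point is that the constants in $\ls$ are allowed to depend on $q$ (and on the constants in \eqref{eq:ballmeasurecond}, \eqref{eq:ballmeasurelow}), consistent with the constant convention. I do not anticipate a genuine obstacle here: the argument is a standard volumetric packing estimate, and the only thing to watch is keeping the radius within the range where the lower volume bound is available.
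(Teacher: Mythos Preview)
Your proposal is correct and follows essentially the same packing argument as the paper's proof. The only cosmetic difference is that the paper uses balls of radius $\tilde\eta$ (rather than $\tilde\eta/2$) around the $y_j$, which still works because $2\tilde\eta\le 2\eta/3<\eta$ ensures disjointness and $\tilde\eta\le 1$ keeps \eqref{eq:ballmeasurelow} applicable; your use of $\tilde\eta/2$ is a harmless extra margin of safety.
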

\begin{proof}\ 
	Let $x\in\XX$, $r>0$, and $J$ be the number of points in $\C\cap \BB(x,r)$. 
	By re-arranging the indices, we may assume without loss of generality that $\C\cap\BB(x,r)=\{y_1,\cdots,y_J\}$.
	Then the balls $\BB(y_j,\tilde{\eta})$ are pairwise disjoint and the union of these balls is a subset of $\BB(x,r+\tilde{\eta})$.
	Therefore, using the conditions \eqref{eq:ballmeasurelow} and \eqref{eq:ballmeasurecond} on measures of balls, we deduce that
	$$
	J\tilde{\eta}^q \ls \sum_{j=1}^J \mu^*\left(\BB(y_j,\tilde{\eta})\right)\le \mu^*\left(\cup_{j=1}^J \BB(y_j,\tilde{\eta})\right)\le \mu^*(\BB(x, r+\tilde{\eta}))\ls (r+\tilde{\eta})^q.
	$$
	This implies \eqref{eq:counting_reg}.
\end{proof}
\begin{lemma}\label{lemma:fundalemma}
	Let $d\ge 0$ and $\nu$ be a $d$-regular measure.
	Let $\Phi_N$ be a $(q,S)$-localized kernel.
	If $N\ge 1$ and $r\ge 1/N$, then 
	\be\label{eq:extintest}
	\sup_{x\in\XX}\int_{\Delta(x,r)}|\Phi_N(x,y)|d|\nu|(y) \ls \tn\nu\tn_d (Nr)^{q-S}(1+d/r)^q.
	\ee
	Hence,
	\be\label{eq:lebesgue_est}
	\sup_{x\in\XX}\int_{\XX}|\Phi_N(x,y)|d|\nu|(y)\ls \tn\nu\tn_d(1+Nd)^q,
	\ee
	and
	\be\label{eq:l2normest}
	\sup_{x\in\XX}\int_{\XX}|\Phi_N(x,y)|^2d|\nu|(y)\ls \tn\nu\tn_d N^q(1+Nd)^q
	\ee
	In particular,  taking $d\nu=gd\mu^*$, where $g\in C(\XX)$, we have $d=0$, $\tn\nu\tn_0\ls \|g\|_\infty$, and obtain for $N >0$
	\be\label{eq:mulebesgue}
	\sup_{x\in\XX}\int_{\XX}|\Phi_N(x,y)||g(y)|d\mu^*(y)\ls \|g\|_\infty.
	\ee
\end{lemma}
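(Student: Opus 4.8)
The plan is to concentrate all the work in the first estimate \eqref{eq:extintest} and then deduce the other three displays from it by routine manipulations. For \eqref{eq:extintest}, I would fix $x\in\XX$ and cover the exterior $\Delta(x,r)$ by dyadic annuli
\[
A_i=\{y\in\XX:\ 2^ir<\rho(x,y)\le 2^{i+1}r\},\qquad i=0,1,2,\dots,
\]
so that $\Delta(x,r)=\bigcup_{i\ge0}A_i$, a disjoint union. On $A_i$ the hypothesis $r\ge 1/N$ forces $N\rho(x,y)>2^iNr\ge 1$, so \eqref{eq:genlocalest} simplifies to $|\Phi_N(x,y)|\ls N^q(2^iNr)^{-S}=N^{q-S}(2^ir)^{-S}$. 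Since $A_i\subset\BB(x,2^{i+1}r)$, the $d$-regularity of $\nu$ gives $|\nu|(A_i)\le\tn\nu\tn_d(2^{i+1}r+d)^q$; pulling out $(2^{i+1}r)^q$ and using $2^{i+1}\ge1$ to bound $d/(2^{i+1}r)\le d/r$ gives $|\nu|(A_i)\ls\tn\nu\tn_d\,2^{iq}r^q(1+d/r)^q$. Multiplying the two bounds, integrating over $A_i$, and summing over $i$ produces the geometric series $\sum_{i\ge0}2^{i(q-S)}$, which converges precisely because $S>q$; its sum is $\ls\tn\nu\tn_d(Nr)^{q-S}(1+d/r)^q$, giving \eqref{eq:extintest}.

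For the consequences I would proceed as follows. To get \eqref{eq:lebesgue_est}, split $\XX=\BB(x,1/N)\cup\Delta(x,1/N)$: on the ball use $|\Phi_N(x,y)|\ls N^q$ (the denominator in \eqref{eq:genlocalest} being at least $1$) together with $|\nu|(\BB(x,1/N))\le\tn\nu\tn_d(1/N+d)^q=\tn\nu\tn_dN^{-q}(1+Nd)^q$, and on the exterior apply \eqref{eq:extintest} with $r=1/N$, which contributes exactly $\tn\nu\tn_d(1+Nd)^q$. Then \eqref{eq:l2normest} follows from $|\Phi_N(x,y)|^2\ls N^q|\Phi_N(x,y)|$ together with \eqref{eq:lebesgue_est}. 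Finally \eqref{eq:mulebesgue} follows because $d\nu=g\,d\mu^*$ is $0$-regular with $\tn\nu\tn_0\ls\|g\|_\infty$ by \eqref{eq:ballmeasurecond}, so \eqref{eq:lebesgue_est} with $d=0$ applies; since the annular argument uses only $r\ge1/N$ and not $N\ge1$, this remains valid for every $N>0$ as stated.

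The only delicate point is the bookkeeping inside \eqref{eq:extintest}: the cutoff $r\ge1/N$ must be exploited so that $N\rho(x,y)\ge1$ on \emph{every} annulus, which is what guarantees that the $\max$ in \eqref{eq:genlocalest} is always resolved in favour of the $(N\rho)^S$ branch; and the factor $(2^{i+1}r+d)^q$ must be handled so that a single, $i$-independent factor $(1+d/r)^q$ can be taken out of the sum. Everything else reduces to the convergent geometric series above and elementary estimates on measures of balls.
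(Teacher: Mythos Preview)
Your proposal is correct and follows essentially the same approach as the paper: the same dyadic-annulus decomposition of $\Delta(x,r)$, the same use of $d$-regularity to bound $|\nu|(A_i)$, the same geometric series $\sum_i 2^{i(q-S)}$, and the same ball/exterior split at radius $1/N$ for \eqref{eq:lebesgue_est}, with \eqref{eq:l2normest} obtained via $|\Phi_N|^2\ls N^q|\Phi_N|$. Your write-up is in fact slightly more explicit than the paper's about why $r\ge 1/N$ lets you resolve the $\max$ in \eqref{eq:genlocalest} and why the factor $(1+d/r)^q$ can be taken outside the sum.
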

\begin{proof}\ 
	Without loss of generality, we may replace $|\nu|/\tn\nu\tn_d$ by $\nu$, and thereby, assume that $\nu$ is a positive measure with $\tn\nu\tn_d=1$.
	In this proof only, let $x\in\XX$, $A_k=\{y\in \XX: 2^kr\le \rho(x,y)<2^{k+1}r\}$. Clearly, \eqref{eq:extintest} implies that $\nu(A_k)\ls 2^{kq}r^q(1+d/r)^q$. Therefore, using \eqref{eq:genlocalest}, we deduce that
	\begin{eqnarray*}
		\int_{\Delta(x,r)}|\Phi_N(x,y)|d|\nu|(y)&=&\sum_{k=0}^\infty \int_{A_k}|\Phi_N(x,y)|d|\nu|(y)\\
		&\ls& N^q\sum_{k=0}^\infty \int_{A_k}\frac{d\nu(y)}{(N\rho(x,y))^S}\ls N^{q-S}r^{-S}\sum_{k=0}^\infty2^{-kS} \int_{A_k}d\nu(y)\\
		&\ls&(Nr)^{q-S}(1+d/r)^q\sum_{k=0}^\infty2^{-k(S-q)}.
	\end{eqnarray*}
	This proves \eqref{eq:extintest}.
	To prove \eqref{eq:lebesgue_est}, we observe that
	$$
	\int_{\BB(x,1/N)}|\Phi_N(x,y)|d\nu(y)\ls N^q\nu(\BB(x,1/N))\ls N^q(1/N+d)^q= (1+Nd)^q.
	$$
	The same estimate is obtained for the integral over $\Delta(x,1/N)$ by using \eqref{eq:extintest} with $r=1/N$. 
	We arrive at \eqref{eq:lebesgue_est} by adding these estimates.
	Since \eqref{eq:genlocalest} shows that $|\Phi_N(x,y)|\ls N^q$ for all $x, y\in\XX$, we get from \eqref{eq:lebesgue_est} that for any $x\in\XX$,
	$$
	\int_{\XX}|\Phi_N(x,y)|^2d|\nu|(y)\ls N^q\int_{\XX}|\Phi_N(x,y)|d|\nu|(y)\ls N^q(1+Nd)^q.
	$$
\end{proof}\\

\noindent
\textsc{Proof of Theorem~\ref{theo:matrixinv}.}\\

In this proof, we simplify our notation, and write $\eta$ in place of $\tilde{\eta}$. 
We consider first the measure $\nu$  as in Lemma~\ref{lemma:reg_examp_lemma}, so that  with $d=\eta$, $\tn\nu\tn_d \ls \eta^{-q}$. 
If $N\eta \ge 1$, then we may use \eqref{eq:extintest} with $r=\eta$ to obtain
\be\label{eq:pf1eqn1}
\max_{1\le k\le M}\sum_{j\not= k}|\Phi_N(y_j,y_k)| \le \int_{\Delta(y_j,\eta)}|\Phi_N(y_j,y)|d\nu(y) \ls N^q(N\eta)^{-S}(1+N\eta)^q\ls N^q (N\eta)^{q-S}.
\ee
In view of our assumption \eqref{eq:ondiagonal}, $\Phi_N(y_j,y_j)\gs N^q$. 
Further, we recall that $S>q$.
So,  if $N\eta\gs 1$, the matrix $\mathbf{L}=[\Phi_N(y_k,y_k)]_{j,k=1}^M$ satisfies
$$
\max_{1\le k\le M}\sum_{j\not k}|\Phi_N(y_j,y_k)| \le (1/2)\Phi_N(y_k,y_k), \qquad k=1,\cdots, M.
$$
The facts that $\mathbf{L}$ is invertible, and \eqref{eq:interp_est} holds,
now follow from well known facts in linear algebra, e.g., \cite[Proposition~6.1]{eignet}.
The estimate \eqref{eq:interp_errest} is clear from \eqref{eq:interp_est} and \eqref{eq:pf1eqn1}.
\qed\\

\noindent\textsc{Proof of Theorem~\ref{theo:empirical}.}\\
The conditions of Theorem~\ref{theo:empirical} ensure those in Theorem~\ref{theo:matrixinv} are satisfied with the kernel $\Phi_N$. 
Hence, a solution of the equation in \eqref{eq:riskmin} exists and therefore, is automatically the minimizer of the  empirical risk. 
It remains to obtain the other bounds in the theorem as indicated.
Once more, we denote $\tilde{\eta}$ by $\eta$. We consider first the case when $\delta(x)\ge \eta/3$. Then using Lemma~\ref{lemma:fundalemma} with $\nu$ being the measure as in Lemma~\ref{lemma:reg_examp_lemma}, and then using \eqref{eq:interp_est} we obtain
\be\label{pf4eqn1}
\left|\sum_{k=1}^M a_k^*\Phi_N(x,y_k)\right| \ls \|F\|_\infty N^{-q}\sum_{k=1}^M |\Phi_N(x,y_k)| \ls \|F\|_\infty N^{-q}\int_{\Delta(x,\eta/3)}|\Phi_N(x,y)|d\nu(y) \ls \|F\|_\infty (N\eta)^{-S}.
\ee
Next, we consider the case when $\delta(x)\le \eta/3$. In this case, there is a unique $y_\ell$ with $\rho(x,y_\ell)=\delta(x)$, and $y_k\in \Delta(x,\eta/3)$ for all $k\not=\ell$. 
Arguing as before, we see that 
\be\label{pf4eqn2}
\left|a_\ell^*\Phi_n(x,y_\ell)-\sum_{k=1}^M a_k^*\Phi_n(x, y_k)\right|\ls \|F\|_\infty (N\eta)^{-S}.
\ee
Therefore, using the Lipschitz conditions \eqref{eq:kernellip}, \eqref{eq:interp_est}, \eqref{eq:interp_errest},    we conclude that
$$
\begin{aligned}
	\left|\sum_{k=1}^M a_k^*\Phi_n(x, y_k)\right.&- \left. F(x)\right|\ls |a_\ell^*\Phi_n(x,y_\ell)-F(x)| +  \|F\|_\infty (N\eta)^{-S}\\
	&\le  |a_\ell^*\Phi_n(x,y_\ell)-a_\ell^*\Phi_n(y_\ell,y_\ell)|+|a_\ell^*\Phi_n(y_\ell,y_\ell)-F(y_\ell)| +|F(y_\ell)-F(x)|+  \|F\|_\infty (N\eta)^{-S}\\
	&\ls (N+\|F\|_{\mbox{Lip}})\delta(x)  +  \|F\|_\infty (N\eta)^{q-S}.
\end{aligned}
$$
\qed

In order to prove Theorem~\ref{theo:theoretical}, we prove first the following lemmas, which will enable us to apply Theorem~\ref{theo:empirical} with $\sigma_N(Ff_0)$ (cf. \eqref{eq:gensigmadef}) in place of $F$ and $\Psi_N$ defined in \eqref{eq:psindef} in place of $\Phi_N$. 

\begin{lemma}\label{lemma:psinloclemma}
	Let $\{\Phi_N\}$ be a $(q,S)$-localized family of kernels, and $\Psi_N$ is defined by \eqref{eq:psindef}. 
	If $d\tau=f_0d\mu^*$ for some $f_0\in C(\XX)$, then   $\{\Psi_N\}$ is a $(q,S)$-localized family of kernels.
\end{lemma}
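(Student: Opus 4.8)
The plan is to show the defining inequality \eqref{eq:genlocalest} directly for $\Psi_N$, using the corresponding bound for $\Phi_N$ together with the integral estimates already assembled in Lemma~\ref{lemma:fundalemma}. Symmetry of $\Psi_N$ is immediate from its definition \eqref{eq:psindef}, since the integrand $\Phi_N(x,z)\Phi_N(y,z)$ is symmetric in $x$ and $y$ (using the symmetry of $\Phi_N$). So the whole content is the decay estimate $|\Psi_N(x,y)|\ls N^q/\max(1,(N\rho(x,y))^S)$, with constants independent of $N,x,y$.

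The key step is a triangle-inequality split of the domain of integration. Fix $x,y\in\XX$ and set $\Delta:=\rho(x,y)$. Write $B_x=\BB(x,\Delta/2)$ and $B_y=\BB(y,\Delta/2)$; these are disjoint. On $B_x$ we have $\rho(y,z)\ge \Delta/2$, so $|\Phi_N(y,z)|\ls N^q/\max(1,(N\Delta/2)^S)$, which we pull out of the integral, and then bound $\int_{B_x}|\Phi_N(x,z)|f_0(z)\,d\mu^*(z)\ls \|f_0\|_\infty$ using \eqref{eq:mulebesgue}. This yields a contribution $\ls N^q/\max(1,(N\Delta)^S)$ (absorbing the $2^S$ into the implied constant). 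Symmetrically, the integral over $B_y$ gives the same bound. On the remaining region $\XX\setminus(B_x\cup B_y)$ we have $\rho(x,z)\ge\Delta/2$ \emph{and} $\rho(y,z)\ge\Delta/2$ simultaneously; here I bound one factor, say $|\Phi_N(y,z)|$, by $N^q/\max(1,(N\Delta/2)^S)$ and pull it out, then again control $\int |\Phi_N(x,z)|f_0(z)\,d\mu^*(z)\ls\|f_0\|_\infty$ by \eqref{eq:mulebesgue}. Summing the three pieces gives $|\Psi_N(x,y)|\ls \|f_0\|_\infty N^q/\max(1,(N\rho(x,y))^S)$, and since $\|f_0\|_\infty$ is a fixed object it is absorbed into $\ls$. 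One should also check the trivial regime $N\rho(x,y)\le 1$ (equivalently $\Delta\le 1/N$): there we just need $|\Psi_N(x,y)|\ls N^q$, which follows from $|\Phi_N(x,z)|\ls N^q$ and \eqref{eq:mulebesgue} without any splitting.

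The only mild subtlety — and the place I expect to spend the most care rather than face a genuine obstacle — is making sure the case split is organized so that in each piece \emph{at least one} of $\rho(x,z),\rho(y,z)$ is comparable to $\Delta$, so that one $\Phi_N$ factor can always supply the $S$-power decay while the other is merely integrated against $f_0\,d\mu^*$. The three-region decomposition $B_x$, $B_y$, $\XX\setminus(B_x\cup B_y)$ achieves exactly this. A cleaner bookkeeping alternative is to note $\max(\rho(x,z),\rho(y,z))\ge \Delta/2$ for every $z$ and split according to which of the two is the larger; either way the estimate \eqref{eq:mulebesgue} (the $d=0$ case of Lemma~\ref{lemma:fundalemma}) does all the integration work, so no new analytic input beyond what is already in the excerpt is needed.
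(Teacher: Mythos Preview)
Your proposal is correct and takes essentially the same approach as the paper: both proofs use a triangle-inequality split of the integration domain so that in each piece at least one of $\rho(x,z),\rho(y,z)$ is $\gs\rho(x,y)$, pull out the corresponding localized $\Phi_N$ factor, and dispose of the remaining factor via \eqref{eq:mulebesgue}. The only cosmetic difference is that the paper uses a two-region split (writing $\BB(x,\delta)\subseteq\Delta(y,\delta)$ with $\delta=\rho(x,y)/3$, so the integral is bounded by the sum over $\Delta(x,\delta)$ and $\Delta(y,\delta)$) rather than your three regions at radius $\rho(x,y)/2$; the claimed disjointness of $B_x$ and $B_y$ is not quite right with closed balls but is nowhere used.
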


\begin{proof}\ 
	In this proof, let $y\not=x\in\XX$, and $\delta=\rho(x,y)/3$. 
	Then $\BB(x,\delta)\subseteq \Delta(y,\delta)$ and 
	\be\label{pf2eqn1}
	\begin{aligned}
		|\Psi_N(x,y)|&= \left|\int_\XX \Phi_N(x,z)\Phi_N(y,z)d\tau(z)\right|\\
		& \le   \int_{\Delta(x,\delta)}|\Phi_N(x,z)\Phi_N(y,z)|d\tau(z)+\int_{\BB(x,\delta)}|\Phi_N(x,z)\Phi_N(y,z)|d\tau(z)\\
		&\le \int_{\Delta(x,\delta)}|\Phi_N(x,z)\Phi_N(y,z)|d\tau(z) +\int_{\Delta(y,\delta)}|\Phi_N(x,z)\Phi_N(y,z)|d\tau(z).
	\end{aligned}
	\ee
	In view of \eqref{eq:genlocalest},
	\be\label{pf2eqn2}
	|\Phi_N(x,z)| \ls \frac{N^q}{\max(1,(N\rho(x,z))^S)}\ls \frac{N^q}{\max(1,(N\delta)^S)}, \qquad z\in\Delta(x,\delta).
	\ee
	The estimates \eqref{eq:mulebesgue}, \eqref{pf2eqn2} and \eqref{pf2eqn1} lead to the fact that $\Psi_n$ is $(q,S)$-localized.
	
	Next, using \eqref{eq:genlocalest} with $x=y$, we obtain
	$$
	\Psi_N(x,x)=\int_\XX |\Phi_N(x,z)|^2d\tau(z)\ls N^q \int_\XX|\Phi_N(x,z)||f_0(z)|d\mu^*(z).
	$$
	Using \eqref{eq:mulebesgue}, we deduce that the last integral is $\ls \|f_0\|_\infty$.
	Therefore, $\Psi_N(x,x)\ls N^q\|f_0\|_\infty$.
\end{proof}

\begin{lemma}\label{lemma:psinadminlemma}
	Let $\{\Phi_N\}$ be an admissible family of kernels, and $\Psi_N$ is defined by \eqref{eq:psindef}. 
	If $d\tau=f_0d\mu^*$ for some $f_0\in C(\XX)$, and $f_0(\x)\ge \mathfrak{m}(f_0)>0$ for $\x\in\XX$ then   $\{\Psi_N\}$ is an admissible family of kernels.
\end{lemma}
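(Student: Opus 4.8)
The plan is to verify the two extra properties that upgrade $(q,S)$-localization to admissibility, since Lemma~\ref{lemma:psinloclemma} already supplies the rest. That lemma shows $\{\Psi_N\}$ is $(q,S)$-localized whenever $f_0\in C(\XX)$, and its proof also records the bound $\Psi_N(x,x)\ls N^q$. So to establish admissibility (Definition~\ref{def:admissibledef}) it remains to prove (i) the matching lower bound $\Psi_N(x,x)\gs N^q$, which together with the upper bound yields \eqref{eq:ondiagonal}, and (ii) the two Lipschitz estimates \eqref{eq:kernellip} for $\Psi_N$. Throughout I would assume $N\ge 1$.

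For (i) I would first use the positivity hypothesis to write $\Psi_N(x,x)=\int_\XX|\Phi_N(x,z)|^2 f_0(z)\,d\mu^*(z)\ge \mathfrak{m}(f_0)\int_\XX|\Phi_N(x,z)|^2\,d\mu^*(z)$; this is the point at which the extra assumption $f_0\ge\mathfrak{m}(f_0)>0$ (absent in Lemma~\ref{lemma:psinloclemma}) enters. Next, since $\{\Phi_N\}$ is admissible, $|\Phi_N(x,x)|\gs N^q$ by \eqref{eq:ondiagonal} and $|\Phi_N(x,z)-\Phi_N(x,x)|\ls N^{q+1}\rho(x,z)$ by \eqref{eq:kernellip}; choosing a sufficiently small constant $c\in(0,1]$ (depending only on the implied constants in those two bounds) gives $|\Phi_N(x,z)|\gs N^q$ for all $z\in\BB(x,c/N)$. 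Restricting the integral to $\BB(x,c/N)$ and invoking the lower regularity \eqref{eq:ballmeasurelow} (valid since $c/N\le 1$), I obtain $\int_\XX|\Phi_N(x,z)|^2\,d\mu^*(z)\gs N^{2q}\mu^*(\BB(x,c/N))\gs N^{2q}(c/N)^q\gs N^q$, hence $\Psi_N(x,x)\gs N^q$.

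For (ii), by symmetry of $\Psi_N$ it suffices to treat the first inequality in \eqref{eq:kernellip}. Writing
\[
|\Psi_N(x,y)-\Psi_N(x',y)|=\left|\int_\XX \bigl(\Phi_N(x,z)-\Phi_N(x',z)\bigr)\Phi_N(y,z)f_0(z)\,d\mu^*(z)\right|,
\]
I would apply the Lipschitz bound \eqref{eq:kernellip} for $\Phi_N$ to extract a factor $\ls N^{q+1}\rho(x,x')$, and then bound the remaining integral $\int_\XX|\Phi_N(y,z)||f_0(z)|\,d\mu^*(z)\ls\|f_0\|_\infty$ by \eqref{eq:mulebesgue}. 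This yields $|\Psi_N(x,y)-\Psi_N(x',y)|\ls N^{q+1}\rho(x,x')$, and the second inequality follows by interchanging the roles of $x$ and $y$.

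The main (essentially the only) obstacle is step (i): this is where admissibility of $\Psi_N$ genuinely requires the full strength of admissibility of $\Phi_N$, since one must combine the on-diagonal lower bound of $\Phi_N$, its Lipschitz continuity, and the lower ball-measure estimate \eqref{eq:ballmeasurelow} to produce a neighbourhood of $x$ of radius $\sim 1/N$ on which $|\Phi_N(x,\cdot)|$ remains of order $N^q$. Everything else reduces to a one-line integral estimate via \eqref{eq:mulebesgue}.
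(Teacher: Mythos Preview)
Your proposal is correct and follows essentially the same route as the paper: invoke Lemma~\ref{lemma:psinloclemma} for localization and the diagonal upper bound, then for the diagonal lower bound use the Lipschitz estimate \eqref{eq:kernellip} together with \eqref{eq:ondiagonal} to find a ball $\BB(x,c/N)$ on which $|\Phi_N(x,z)|\gs N^q$, restrict the integral there, apply $f_0\ge\mathfrak{m}(f_0)$ and \eqref{eq:ballmeasurelow}; the Lipschitz property of $\Psi_N$ is handled identically via \eqref{eq:kernellip} for $\Phi_N$ and \eqref{eq:mulebesgue}. The only cosmetic difference is that you pull out the factor $\mathfrak{m}(f_0)$ before restricting to the ball, whereas the paper restricts first.
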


\begin{proof}\ 
	We  have already proved in Lemma~\ref{lemma:psinloclemma} that $\{\Psi_N\}$ is $(q,S)$-localized. 
	In order to prove that \eqref{eq:ondiagonal} holds with $\Psi_N$ replacing $\Phi_N$, we need only to prove that
	\be\label{pf3eqn1}
	\Psi_N(x,x) \gs N^q, \qquad x\in\XX.
	\ee
	Since $\Phi_N$ is admissible, we deduce from  \eqref{eq:kernellip} that that there exists $c>0$ such that if $\rho(x,z)\le c/N$ then
	$$
	|\Phi_N(x,x)-\Phi_N(x,z)|\ls N^{q+1}\rho(x,z)\le (1/2) |\Phi_N(x,x)|; \mbox{   i.e.,   } |\Phi_N(x,z)|\ge (1/2) |\Phi_N(x,x)| \gs N^q.
	$$
	Using \eqref{eq:ondiagonal}, we obtain
	$$
	\Psi_N(x,x)=\int_\XX |\Phi_N(x,z)|^2f_0(z)d\mu^*(z)\ge \mathfrak{m}(f_0)\int_{\BB(x,c/N)}|\Phi_N(x,z)|^2d\mu^*(z)\gs N^{2q}\mathfrak{m}(f_0)\mu^*\left(\BB(x,c/N)\right).
	$$
	In view of \eqref{eq:ballmeasurelow}, this leads to \eqref{pf3eqn1}.
	
	Next, if $x,x', y\in\XX$, then using \eqref{eq:kernellip} and \eqref{eq:mulebesgue}, we obtain
	$$
	\left|\Psi_N(x,y)-\Psi_n(x',y)\right| \le \int_\XX \left|\Phi_N(x,z)-\Phi_N(x',z)\right||\Phi_N(z,y)||f_0(z)|d\mu^*(z)
	\ls \|f_0\|_\infty N^{q+1}\rho(x,x').
	$$
	This proves \eqref{eq:kernellip} with $\Psi_N$ in place of $\Phi_N$.
\end{proof}\\

\noindent\textsc{Proof of Theorem~\ref{theo:theoretical}.}\\

Since $\Psi_N$ is an admissible kernel, we may apply 
Theorem~\ref{theo:matrixinv} with $\Psi_N$ replacing $\Phi_N$ and $\sigma_N(Ff_0)(y_j)$ in place of $f_j$ (cf. \eqref{eq:gensigmadef}) to obtain $\{a_k^*\}$ such that
\be\label{pf5eqn1}
\sum_{k=1}^M a_k^*\Psi_N(y_k,y_j)=\sigma_N(Ff_0)(y_j)=\int_\XX \Phi_N(y_j,y)F(y)f_0(y)d\mu^*(y), \qquad j=1,\cdots, M,
\ee
and moreover, (cf. \eqref{eq:interp_est} and \eqref{eq:interp_errest})
\be\label{pf5eqn2}
\max_{1\le k\le M}|a_k^*| \ls N^{-q}\max_{1\le j\le M}|\sigma_N(Ff_0)(y_j)|,
\ee
and
\be\label{pf5eqn3}
\max_{1\le k\le M} \left|a_k^*\Psi_N(y_k,y_k)-\sigma_N(Ff_0)(y_k)\right|\ls (N\eta)^{q-S}\max_{1\le j\le M}|\sigma_N(Ff_0)(y_j)|.
\ee 
We note that the minimizer $P_T(\tau;\mathcal{V}(C);F)$ of the theoretical least square loss is given by
\be\label{pf5eqn4}
P_T(\tau;\mathcal{V}(C);F)(x)=\sum_{k=1}^M a_k^*\Phi_N(x,y_k).
\ee
Further, in view of \eqref{eq:mulebesgue},
\be\label{pf5eqn5}
\|\sigma_N(Ff_0)\|_\infty \ls \|Ff_0\|_\infty.
\ee
Let $x\in\XX$ and $\delta(x)>\eta/3$. 
Then using the measure $\nu$ as in Lemma~\ref{lemma:reg_examp_lemma}, we obtain from \eqref{pf5eqn2}, \eqref{eq:extintest}, and \eqref{pf5eqn5} that
\be\label{pf5eqn6}
\begin{aligned}
	|P_T(\tau;\mathcal{V}(C);F)(x)|&\le \sum_{k: y_k\in \Delta(x,\eta/3)} |a_k^*||\Phi_N(x,y_k)|\ls N^{-q}\|\sigma_N(Ff_0)\|_\infty\int_{y\in \Delta(x,\eta/3)}|\Phi_N(x,y)|d\nu(y)\\
	&\ls N^{-q}\|\sigma_N(Ff_0)\|_\infty\eta^{-q}(N\eta)^{q-S} \ls \|Ff_0\|_\infty (N\eta)^{-S}.
\end{aligned}
\ee
This proves \eqref{eq:theo_away}.

Next, let $\delta(x)\le \eta/3$. 
Then there exists a unique $\ell$ such that $\delta(x)=\rho(x,y_\ell)$. So, using the fact that $\Psi_N(y_\ell,y_\ell)\sim \Phi_N(y_\ell,y_\ell)\sim N^q$,  \eqref{pf5eqn3}, and \eqref{pf5eqn5}, we obtain
$$
\left|a_\ell^*\Phi_n(y_\ell,y_\ell)-\frac{\Phi_n(y_\ell,y_\ell)}{\Psi_n(y_\ell,y_\ell)}\sigma_N(Ff_0)(y_\ell)\right| \ls (N\eta)^{q-S}\|\sigma_N(Ff_0)\|_\infty\ls (N\eta)^{q-S}\|Ff_0\|_\infty,
$$
and hence,
\be\label{pf5eqn7}
\left|a_\ell^*\Phi_n(y_\ell,y_\ell)-\frac{\Phi_n(y_\ell,y_\ell)}{\Psi_n(y_\ell,y_\ell)}F(y_\ell)f_0(y_\ell)\right| \ls \|Ff_0-\sigma_N(Ff_0)\|_\infty + (N\eta)^{q-S}\|Ff_0\|_\infty.
\ee
Since
$$
|F(y_\ell)f_0(y_\ell)-F(x)f_0(x)|\le \|Ff_0\|_{\mbox{Lip}}\delta(x),
$$
we deduce from \eqref{pf5eqn7}, \eqref{eq:kernellip}, \eqref{pf5eqn2},   and \eqref{pf5eqn5} that
\be\label{pf5eqn8}
\left|a_\ell^*\Phi_n(x,y_\ell)-\frac{\Phi_n(y_\ell,y_\ell)}{\Psi_n(y_\ell,y_\ell)}F(x)f_0(x)\right| \ls (N+\|Ff_0\|_{\mbox{Lip}})\delta(x)+ \|Ff_0-\sigma_N(Ff_0)\|_\infty + (N\eta)^{q-S}\|Ff_0\|_\infty.
\ee
Arguing as in \eqref{pf5eqn6}, we get
$$
\sum_{k\not=\ell}|a_k^*\Phi_n(x, y_k)| \ls \|Ff_0\|_\infty (N\eta)^{-S}.
$$
Hence, \eqref{pf5eqn8} leads to \eqref{eq:theoretical_err_est}. \qed

\bhag{Conclusions}\label{bhag:conclusions}
In \cite{mhaskar2019deep}, HNM had developed a very simple method to approximate functions on unknown manifolds without making any effort to learn the manifold itself (e.g., by estimating an atlas or eigen-decomposition of the Laplace-Beltrami operator). 
This method involves a simple matrix vector multiplication using a specially constructed localized kernel. 
However, the approach requires that the training data be dense on the manifold. 
In this paper, we examine the accuracy of the approximation if the training data is sparse instead, and we use either empirical risk minimization or the theoretical  square loss minimization. 
We study this question in a very general setting of a locally compact metric measure space, thereby initializing a theme for further research where the unknown manifold is known to be a sub-manifold of a known manifold rather than just a high dimensional Euclidean space.
In practice, the problem arises, for example, in analysis of time series, for which the domain knowledge indicates that the ambient space is a Grassmann manifold.
We present a detailed experimental study where we use different variations  of the localized kernel (\eqref{fastkerndef}) to the classification of hand gestures using micro-doppler radar data - a problem of interest in its own right. 
Our results show that the SVMs trained with our proposed localized kernel and PCA components of zero-padded vectorized spectrograms outperform existing methods for micro-Doppler gesture recognition, including some CNNs by a $3\%-6\%$ margin of accuracy, but with a much shorter training time. To demonstrate the fact that our theory is general purpose, we use similar techniques for the classification of spoken digits from a video data set, and demonstrate how an embedding of the data set onto an unknown submanifold of a Grassmann manifold yields superior results.

\section*{Acknowledgments}
We thank Professor Dr. Naoki Saito at University of California, Davis for providing us with the data set used in \cite{lieu_signal_2011}.
We thank the referees for their valuable suggestions for the improvement of this paper.

\bibliographystyle{abbrv}
\bibliography{doppler_bib}

\end{document}